\definecolor{darkgreen}{rgb}{0,0.45,0}
\csv@pretable\begin{tabular}{|*{\csv@columncount}{c|}}\csv@tablehead,
\csv@tablefoot\end{tabular}\csv@posttable,
\newcommand{\csvautotabularcenter}[2][]{\csvloop{autotabularcenter={#2},#1}}
\def\noteson{%
\gdef\luis##1{\noindent{\color{blue}[Luis: ##1]}}%
\gdef\steve##1{\noindent{\color{red}[Steve: ##1]}}%
\gdef\david##1{\noindent{\color{darkgreen}[David: ##1]}}%
\gdef\mathieu##1{\noindent{\color{orange}[Mathieu: ##1]}}%
\gdef\magnus##1{\noindent{\color{gray}[Magnus: ##1]}}%
\gdef\todo##1{\noindent{\color{violet}[to do: ##1]}}}%
\newtheorem{theorem}{Theorem}
\newtheorem{lemma}{Lemma}
\newtheorem{proposition}{Proposition}
\theoremstyle{definition}
\newtheorem{definition}{Definition}
\newtheorem{example}{Example}
\newtheorem{remark}{Remark}
\definecolor{light-gray}{gray}{0.8}
\renewcommand{\to}{\xrightarrow{\;\;\;}}
\newcommand{\define}[1]{\textit{#1}}
\renewcommand{\epsilon}{\varepsilon}
\renewcommand{\phi}{\varphi}
\newcommand{\dslicedwass}{SW}
\newcommand{\kslicedwass}{k_{SW}}
\newcommand{\fslicedwass}{\Phi_{SW}}
\newcommand{\vect}{\mathrm{vec}}
\newcommand{\msp}{\Mcal}
\newcommand{\supp}{\mathrm{supp}}
\newcommand{\KR}{\mathsf{KR}}
\DeclareMathAlphabet{\mathpzc}{OT1}{pzc}{m}{it}
\newcommand\DEFINEALPHABETLOOP[3]{%
  \ifx\relax#3\expandafter\@gobble\else\expandafter\@firstofone\fi
  {\expandafter\newcommand\expandafter*\csname#3#1\endcsname{#2{#3}}%
   \DEFINEALPHABETLOOP{#1}{#2}}%
}%
\newcommand\Definealphabet[2]{%
  \DEFINEALPHABETLOOP{#1}{#2}abcdefghijklmnopqrstuvwxyzABCDEFGHIJKLMNOPQRSTUVWXYZ\relax
}%
\author{%
David Loiseaux$^{1*}$,
Luis Scoccola$^{2*}$,
\textbf{Mathieu Carrière}$^{1}$,
\textbf{Magnus B.~Botnan}$^{3}$,
\textbf{Steve Oudot}$^{4}$
\vspace{0.15cm}\\
$^1$DataShape, Centre Inria d'Université Côte d'Azur
\quad $^2$Mathematics, University of Oxford\\
\quad $^3$Mathematics, Vrije Universiteit Amsterdam
\quad $^4$GeomeriX, Inria Saclay and \'Ecole polytechnique
}
\def\blfootnote{\gdef\@thefnmark{}\@footnotetext}
\title{Stable Vectorization of Multiparameter Persistent Homology using Signed Barcodes as Measures}
\begin{document}

\doparttoc 
\faketableofcontents 

\part{} 

\blfootnote{$^*$ Equal contribution.}

%

\maketitle

\begin{abstract}
  Persistent homology (PH) provides 
  topological descriptors for geometric data, such as weighted graphs, which are interpretable, stable to perturbations, 
  and invariant under, e.g., relabeling.
  Most applications of PH focus on the one-parameter case---where the descriptors summarize the changes in topology of data as it is filtered by a single quantity of interest---and there is now a wide array of methods enabling the use of one-parameter PH descriptors in data science, which rely on the stable vectorization of these descriptors as elements of a Hilbert space.
  Although the multiparameter PH (MPH) of data that is filtered by several quantities of interest encodes much richer information than its one-parameter counterpart, the scarceness of stability results for MPH descriptors has so far limited the available options for the stable vectorization of MPH.
  In this paper, we aim to bring together the best of both worlds by showing how the interpretation of signed barcodes---a recent family of MPH descriptors---as signed measures leads to natural extensions of vectorization strategies from one parameter to multiple parameters.
  The resulting feature vectors are easy to define and to compute, and provably stable.
  While, as a proof of concept, we focus on simple choices of signed barcodes and vectorizations, we already see notable performance improvements when comparing our feature vectors to state-of-the-art topology-based methods on various types of data.
\end{abstract}

\section{Introduction}

\subsection{Context}

Topological Data Analysis (TDA) \cite{chazal-michel} is a field of data science that provides descriptors for geometric data.
These descriptors encode {\em topological structures}  hidden in the data, as such they  are complementary to more common descriptors. And since TDA methods usually require the sole knowledge of a metric or dissimilarity measure on the data, they are widely applicable. For these reasons, TDA has found successful applications in a wide range of domains, including, e.g., computer graphics~\cite{Poulenard2018}, computational biology~\cite{Rizvi2017}, or material sciences~\cite{Saadatfar2017}, to name a few.  

The mathematical definition of TDA's topological descriptors relies on {\em persistent homology}, whose input is a simplicial complex (a special kind of hypergraph) filtered by an $\Rbb^n$-valued function. The choice of simplicial complex and function is application-dependent, a common choice being the complete hypergraph on the data (or some sparse approximation) filtered by scale and/or by some density estimator. The sublevel-sets of the filter function form what is called a {\em filtration}, i.e., a family $\{S_x\}_{x \in \Rbb^n}$ of simplicial complexes with the property that $S_x \subseteq S_{x'}$ whenever $x \leq x' \in \Rbb^n$ (where by definition $x\leq x' \Leftrightarrow x_i\leq x'_i\ \forall 1\leq i\leq n$). Applying standard simplicial homology~\cite{hatcher} with coefficients in some fixed field~$\kbb$ to~$\{S_x\}_{x \in \Rbb^n}$ yields what is called a {\em persistence module}, i.e., an $\Rbb^n$-parametrized family of $\kbb$-vector spaces connected by $\kbb$-linear maps---formally, a {\em functor} from~$\Rbb^n$ to the category~$\vect$ of $\kbb$-vector spaces. This module encodes algebraically the evolution of the topology through the filtration~$\{S_x\}_{x \in \Rbb^n}$, but in a way that is neither succinct nor amenable~to~interpretation.

Concise representations of persistence modules are well-developed in
the \emph{one-parameter} case where $n=1$.  Indeed, under mild assumptions,
the modules are fully characterized by their \emph{barcode}~\cite{crawley2015decomposition},
which can be represented as a point measure on the extended
Euclidean plane \cite{divol-lacombe}. The classical interpretation of
this measure is that its point masses---also referred to as
\emph{bars}---encode the appearance and disappearance times of the topological structures hidden in the data through the filtration.
Various {\em stability theorems}~\cite{chazal-silva-glisse-oudot,cohen-steiner-edelsbrunner-harer,cohen-steiner-edelsbrunner-harer-mileyko,skraba-turner} ensure that this barcode representation is \emph{stable} under
perturbations of the input filtered simplicial complex, where barcodes are compared using optimal
transport-type distances, often referred to as Wasserstein or
bottleneck
distances.
In machine learning contexts, barcodes are turned into vectors in some
Hilbert space, for which it is possible to rely on the
vast literature in geometric measure and optimal transport theories. A
variety of stable vectorizations of barcodes have be designed in this
way---see \cite{dashti-asaad-jimenez-nanda-paluzo-soriano} for a
survey.

There are many applications of TDA where \emph{multiparameter persistence modules} (that is, when $n \geq 2$) are more natural than, and lead to improved performance when compared to, one-parameter persistence modules.
These include, for example, noisy point cloud data
\cite{vipond-et-al},
where one parameter accounts for the geometry of the data and another filters the data by density,
and multifiltered graphs~\cite{demir-coskunuzer-gel-segovia-chen-kiziltan},
where different parameters account for different filtering functions.

In the multiparameter however,
the concise and stable representation of persistence modules is known to be a substantially more involved problem~\cite{carlsson-zomorodian}.
The main stumbling block is that, due to some fundamental algebraic reasons, there is no hope for the existence of a concise descriptor like the barcode that can completely characterize multiparameter persistence modules.
This is why research in the last decade has focused on proposing and studying incomplete descriptors---see, e.g.,~\cite{botnan-lesnick} for a recent survey. Among these, the {\em signed barcodes} stand out as natural extensions of the usual one-parameter barcode \cite{asashiba-escolar-nakashima-yoshiwaki,botnan-oppermann-oudot,botnan-oppermann-oudot-scoccola,kim-memoli}, being also interpretable as point measures. The catch however is that some of their points may have negative weights, so the measures are \emph{signed}. As a consequence, their analysis is more delicate than that of one-parameter barcodes, and it is only very recently that their optimal transport-type stability has started to be understood~\cite{botnan-oppermann-oudot-scoccola,oudot-scoccola}, while there currently is still no available technique for turning them into vectors in some Hilbert space.

\subsection{Contributions}

We believe the time is ripe to promote the use of signed barcodes for feature generation in machine learning; for this we propose the following  pipeline
(illustrated in~\cref{figure:pipeline}):
\[
  \bigg\{ \substack{\text{geometric}\\\text{datasets}} \bigg\}
  \xrightarrow{\text{filtration}}
  \bigg\{ \substack{\text{multifiltered}\\\text{simplicial}\\\text{complexes}} \bigg\}
  \xrightarrow{\text{homology}}
  \bigg\{ \substack{\text{multiparameter}\\\text{persistence}\\\text{modules}} \bigg\}
  \xrightarrow{\substack{\text{signed barcode}\\\text{descriptor}}}
  \bigg\{ \substack{\text{signed}\\\text{barcodes}} \bigg\}
  \xrightarrow{\text{vectorization}}
  \substack{\text{Hilbert}\\\text{space}}
\]

The choice of  filtration being application-dependent, we will mostly follow the choices made in related work, for the sake of comparison---see also~\cite{botnan-lesnick} for an overview of standard choices.
As signed barcode descriptor, we will mainly use the {\em Hilbert decomposition signed measure}~(\cref{definition:Hilbert-signed-measure}), and when the simplicial complex is not too large we will also consider the {\em Euler decomposition signed measure}~(\cref{definition:euler-char-signed-measure}). These two descriptors are arguably the simplest existing signed measure descriptors, so they will serve as a proof of concept for our pipeline. They also offer the advantage of being efficiently computable, with effective implementations already available~\cite{hacquard-lebovici,kerber-rolle,scoccola-rolle,rivet}.
%
With these choices of signed measure descriptors, the only missing step in our pipeline is the last one---the vectorization.  Here is the summary of our contributions, the details follow right after:

\begin{itemize}[leftmargin=5.5mm]
\item We introduce two general vectorization techniques for signed barcodes (\cref{definition:convolution-vectorization,definition:sliced-wasserstein-kernel}).
  \item We prove Lipschitz-continuity results (\cref{theorem:homology-stability,proposition:stability-convolution,proposition:stability-sliced-wasserstein}) that ensure the robustness of our entire feature generation pipeline.
  \item We illustrate the practical performance of our pipeline compared to other baselines in various supervised and unsupervised learning tasks. 
\end{itemize}

\begin{figure}
    \includegraphics[width=1.\textwidth]{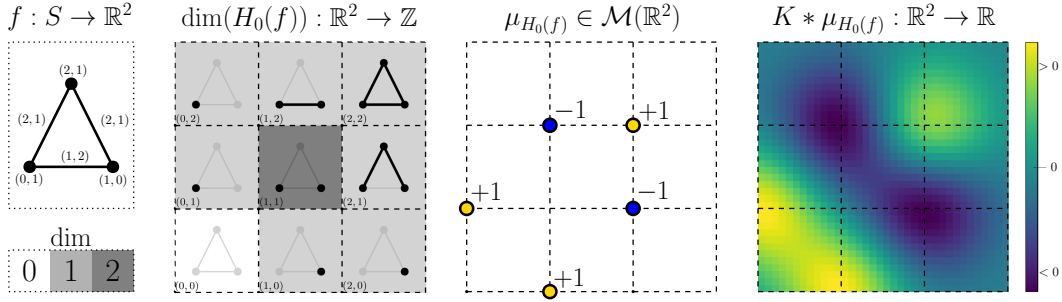}
    \caption{%
      An instance of the pipeline proposed in this article.
    \emph{Left to right:}
    A filtered simplicial complex $(S,f)$ (in this case a bi-filtered graph);
    the Hilbert function of its $0$\emph{th} dimensional homology persistence module $H_0(f) : \Rbb^2 \to \vect$ (which in this case simply counts the number of connected components);
    the Hilbert decomposition signed measure $\mu_{H_0(f)}$ of the persistence module;
    and the convolution of the signed measure with a Gaussian kernel.
    }
    \label{figure:pipeline}
\end{figure}

\noindent\textbf{Vectorizations.}
Viewing signed barcodes as signed point measures enables us to rely on the literature in signed measure theory in order to adapt existing vectorization techniques for usual barcodes to the signed barcodes in a natural way.
Our first vectorization (\cref{definition:convolution-vectorization}) uses convolution with a kernel function and is an adaptation of the {\em persistence images} of~\cite{adams-et-al}; see \cref{figure:pipeline}.
Our second vectorization (\cref{definition:sliced-wasserstein-kernel}) uses the notion of slicing of measures of~\cite{rabin-peyre-delon-bernot}
and is an adaptation of the {\em sliced Wasserstein kernel} of~\cite{carriere-cuturi-oudot}.
Both vectorizations are easy to implement and run fast in practice; we assess the runtime of our pipeline in \cref{section:runtime-experiments}.
We find that our pipeline is faster than the other topological baselines by one or more orders of magnitude.



\noindent\textbf{Theoretical stability guarantees.}
We prove that our two vectorizations are Lipschitz-continuous with respect to the Kantorovich--Rubinstein norm on signed measures
and the norm of the corresponding Hilbert space (\cref{proposition:stability-convolution,proposition:stability-sliced-wasserstein}).
Combining these results
with the Lipschitz-stability of the signed barcodes themselves (\cref{theorem:homology-stability}) ensures the robustness of our entire feature generation pipeline with respect to perturbations of the filtrations.


\noindent\textbf{Experimental validation.}
Let us emphasize that our choices of signed barcode descriptors are strictly weaker than the usual barcode in the one-parameter case. In spite of this, our experiments show that the performance of the one-parameter version of our descriptors is comparable to that of the usual barcode. 
We then demonstrate that this good behavior generalizes to the multiparameter case, where it is in fact amplified since our pipeline can outperform its multiparameter competitors (which rely on theoretically stronger descriptors and have been shown to perform already better than the usual one-parameter TDA pipeline) and is competitive  with other, non-topological baselines, on a variety of data types (including graphs and time series). 
For a proof that our descriptors are indeed weaker than other previously considered descriptors, we refer the reader to~\cref{lemma:hilbert-weak} in~\cref{appendix:theory}.
%



\subsection{Related work}
\label{section:related-work-intro}
We give a brief overview of related vectorization methods for multiparameter persistence;
we refer the reader to \cref{section:related-work} for more details.

The \emph{multiparameter persistence kernel} \cite{corbet-fugacci-kerber-landi-wang} and \emph{multiparameter persistence landscape} \cite{vipond} restrict the multiparameter persistence module to certain families of one-parameter lines and leverage some of the available vectorizations for one-parameter barcodes.
The \emph{generalized rank invariant landscape} \cite{xin-mukherjee-samaga-dey} computes the generalized rank invariant over a prescribed collection of intervals (called {\em worms}) then, instead of decomposing the invariant as we do, it stabilizes it using ideas comings from persistence landscapes~\cite{bubenik}.
The {\em multiparameter persistence image}~\cite{carriere-blumberg}  decomposes the multiparameter persistence module into interval summands and vectorizes these summands individually. Since this process is known to be unstable,
there is no guarantee on the stability of the corresponding vectorization.
The {\em Euler characteristic surfaces} \cite{beltramo} and the methods of \cite{hacquard-lebovici} do not work at the level of persistence modules but rather at the level of filtered simplicial complexes,  and are based on the computation of the Euler characteristic of the filtration at each index in the multifiltration.
These methods are very efficient when the filtered simplicial complexes are small, but can be prohibitively computationally expensive for high-dimensional simplicial complexes, such as Vietoris--Rips complexes (\cref{example:function-rips}).


\section{Background}
\label{section:background}

In this section, we recall the basics on multiparameter persistent homology and signed measures.
We let $\vect$ denote the collection of finite dimensional vector spaces
over a fixed field $\kbb$.
Given $n \geq 1 \in \Nbb$, we consider $\Rbb^n$ as a poset, with $x \leq y \in \Rbb^n$ if $x_i \leq y_i$ for all $1 \leq i \leq n$.

\noindent\textbf{Simplicial complexes.}
A finite \define{simplicial complex} consists of a finite set $S_0$ together with a set $S$ of non-empty subsets of $S_0$ such that, if $s \in S_0$ then $\{s\} \in S$, and if $\tau_2 \in S$ and $\emptyset\neq \tau_1 \subseteq \tau_2$, then $\tau_1 \in S$.
We denote such a simplicial complex by $S$, refer to the elements of $S$ as the \define{simplices} of $S$, and refer to $S_0$ as the \define{underlying set} of $S$.
The \define{dimension} of a simplex $\tau \in S$ is $\dim(\tau) = |\tau|-1 \in \Nbb$.
In particular, the simplices of dimension $0$ correspond precisely to the elements of the underlying set.

\begin{example}
  \label{example:graph}
  Let $G$ be a simple, undirected graph.
  Then $G$ can be encoded as a simplicial complex $S$ with simplices only of dimensions $0$ and $1$, by letting the underlying set of $S$ be the vertex set of $G$, and by having a simplex of dimension $0$ (resp.~$1$) for each vertex (resp. edge) of~$G$.
\end{example}

\begin{definition}
A \define{(multi)filtered simplicial complex} is a pair $(S,f)$ with $S$ a simplicial complex and $f : S \to \Rbb^n$ a monotonic map, i.e., a function such that $f(\tau_1) \leq f(\tau_2)$ whenever $\tau_1 \subseteq \tau_2 \in S$.
Given a filtered simplicial complex $(S, f : S \to \Rbb^n)$ and $x \in \Rbb^n$, we get a subcomplex $S^f_x \coloneqq \{\tau \in S : f(\tau) \leq x\} \subseteq S$, which we refer to as the \define{$x$-sublevel set} of $(S, f)$.
\end{definition}


\begin{example}
  \label{example:function-rips}
  Let $(P,d_P)$ be a finite metric space.
  The \define{(Vietoris--)Rips complex} of $P$ is the filtered simplicial complex $(S, f : S \to \Rbb)$,  where $S$ is the simplicial complex with underlying set $P$ and all non-empty subsets of $P$ as simplices, and $f(\{p_0, \dots, p_n\}) = \max_{i,j} d_P(p_i, p_j)$.
  Then, for example, given any $x \in \Rbb$, the connected components of the $x$-sublevel set $S^f_x$ coincide with the single-linkage clustering of $P$ at distance scale $x$ (see, e.g., \cite{carlsson-memoli-2}).
  In many applications, it is useful to also filter the Rips complex by an additional function $d : P \to \Rbb$ (such as a density estimator).
  The \define{function-Rips complex} of $(P,d)$ is the filtered simplicial complex $(S,g : S \to \Rbb^2)$ with $g(\{p_0, \dots, p_n\}) = \left(\, f(\{p_0, \dots, p_n\})\, , \, - \min_i d(p_i)\,\right)$.
  Thus, for example, the $(t, -u)$-sublevel set is the Rips complex at distance scale $t$ of the subset of $P$ of points with $d$-value at least $u$.
  See \cite{carlsson-memoli,carlsson-zomorodian,chazal-guibas-oudot-skraba} for examples.
\end{example}

\noindent\textbf{Homology.}
For a precise definition of homology with visual examples, see, e.g., \cite{fomenko}; the following will be enough for our purposes.
Given $i \in \Nbb$, the \define{homology} construction maps any finite simplicial complex $S$ to a $\kbb$-vector space $H_i(S)$ and any inclusion of simplicial complexes $S \subseteq R$ to a $\kbb$-linear map $H_i(S \subseteq R) : H_i(S) \to H_i(R)$.
Homology is functorial, meaning that, given inclusions $S \subseteq R \subseteq T$, we have an equality
$H_i(R \subseteq T) \circ H_i(S \subseteq R) = H_i(S \subseteq T)$.

\noindent\textbf{Multiparameter persistence modules.}
An \emph{$n$-parameter persistence module} consists of an assignment $M : \Rbb^n \to \vect$ together with, for every pair of comparable elements $x \leq y \in \Rbb^n$, a linear map $M(x \leq y) : M(x) \to M(y)$, with the property that $M(x \leq x)$ is the identity map for all $x \in \Rbb^n$, and that $M(y \leq z) \circ M(x \leq y) = M(x \leq z)$ for all $x \leq y \leq z \in \Rbb^n$.

\begin{example}
  \label{example:sublevelset-homology}
  Let $(S,f: S \to \Rbb^n)$ be a filtered simplicial complex.
    Then, we have an inclusion $S^f_x \subseteq S^f_y$ whenever $x \leq y$.
  By functoriality, given any $i \in \Nbb$, we get an $n$-parameter persistence module $H_i(f) : \Rbb^n \to \vect$ by letting $H_i(f)(x) \coloneqq H_i(S^f_x)$.
\end{example}

\begin{definition}
Let $M : \Rbb^n \to \vect$.
The \emph{Hilbert function} of $M$, also known as the \emph{dimension vector} of $M$, is the function $\dim(M) : \Rbb^n \to \Zbb$ given by $\dim(M)(x) = \dim(M(x))$.
\end{definition}

In practice, one often deals with persistence modules defined on a finite grid of $\Rbb^n$. 
These persistence modules are \emph{finitely presentable} (fp), which, informally, means that they can be encoded using finitely many matrices.
See \cref{definition:finitely-presentable} in the appendix for a formal definition, but note that this article can be read without a precise understanding of this notion.

%

\noindent\textbf{Signed measures.}
The Kantorovich--Rubinstein norm was originally defined by Kantorovich and subsequently studied in \cite{kantorovich-rubinstein} in the context of measures on a metric space; see \cite{hanin}. 
We denote by $\msp(\Rbb^n)$ the space of finite, signed, Borel measures on $\Rbb^n$, and by $\msp_0(\Rbb^n) \subseteq \msp(\Rbb^n)$ the subspace of measures of total mass zero.
For $\mu \in \msp(\Rbb^n)$, we let $\mu = \mu^+ - \mu^-$ denote its Jordan decomposition \cite[p.~421]{billingsley}, so that $\mu^+$ and $\mu^-$ are finite, positive measures on $\Rbb^n$.

\begin{definition}
  \label{definition:kantorovich-norm}
  For $p \in [1,\infty]$, the \define{Kantorovich--Rubinstein norm} of $\mu \in \msp_0(\Rbb^n)$ is defined as
  \[
    \|\mu\|^\KR_p = \inf\left\{ \int_{\Rbb^n \times \Rbb^n} \|x - y\|_p \; d \psi(x,y)\; :\; \psi \text{ is a measure on $\Rbb^n \times \Rbb^n$ with marginals $\mu^+,\mu^-$ } \right\}.
  \]
\end{definition}

We can then compare any two measures $\mu, \nu \in \msp(\Rbb^n)$ of the same total mass using $\|\mu - \nu\|^\KR_p$.

\begin{remark}
  \label{remark:change-of-p}
Note that, if $p\geq q$, then $\|-\|_p^\KR \leq \|-\|_q^\KR$.
For $n=1$, the definition of $\|-\|_p^\KR$
is independent of $p$, and in that case we just denote it by $\|-\|^\KR$.
\end{remark}


Given $x \in \Rbb^n$, let $\delta_x$ denote the Dirac measure at $x$.
A \define{finite signed point measure} is any measure in $\msp(\Rbb^n)$ that can be written as a finite sum of signed Dirac measures.

The following result says that, for point measures, the computation of the Kantorovich--Rubinstein norm reduces to an assignment problem, and that, in the case of point measures on the real line, the optimal assignment has a particularly simple form.

\begin{proposition}
  \label{proposition:computation-kantorovich}
  Let $\mu \in \msp_0(\Rbb^n)$ be a finite signed point measure with $\mu^+ = \sum_{i} \delta_{x_i}$ and $\mu^- = \sum_{i} \delta_{y_i}$, where $X = \{x_1, \dots, x_k\}$ and $Y = \{y_1, \dots, y_k\}$ are lists of points of $\Rbb^n$.
  Then,
  \[
    \|\mu\|_p^\KR = \min \left\{ \sum_{1 \leq i \leq k} \|x_i - y_{\gamma(i)}\|_p : \text{ $\gamma$ is a permutation of $\{1, \dots, k\}$ } \right\}.
  \]
  Moreover, if $n=1$ and $X$ and $Y$ are such that $x_i \leq x_{i+1}$ and $y_i \leq y_{i+1}$ for all $1 \leq i \leq k-1$, then the above minimum is attained at the identity permutation $\gamma(i) = i$.
\end{proposition}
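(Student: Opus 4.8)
The plan is to prove the two assertions separately: first the reduction of the Kantorovich--Rubinstein norm to a finite assignment problem for general $n$, then the optimality of the identity permutation in the sorted one-dimensional case.

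For the first assertion, I would begin by noting that since $\mu \in \msp_0(\Rbb^n)$ the measures $\mu^+$ and $\mu^-$ have equal total mass $k$, so the set of couplings in \cref{definition:kantorovich-norm} is nonempty. Because $\mu^+$ and $\mu^-$ are supported on the finite sets $X$ and $Y$, any measure $\psi$ with these marginals is supported on $X \times Y$, so the infimum collapses to a finite-dimensional linear program: minimize $\sum_{i,j} \|x_i - y_j\|_p \, \psi_{ij}$ over nonnegative matrices $(\psi_{ij})$ whose row and column sums match the masses of $\mu^+$ and $\mu^-$. Treating each Dirac summand as carrying unit mass (so that repeated points are lifted to distinct indices, with transport cost between indices $i$ and $j$ equal to $\|x_i - y_j\|_p$), the feasible region becomes the Birkhoff polytope of doubly stochastic $k \times k$ matrices, and aggregation over coincident points preserves cost. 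The objective being linear, its minimum over this polytope is attained at an extreme point, and by the Birkhoff--von Neumann theorem these extreme points are exactly the permutation matrices; evaluating the cost at the permutation matrix of $\gamma$ gives $\sum_i \|x_i - y_{\gamma(i)}\|_p$, which yields the claimed formula.

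For the second assertion, I would use a standard uncrossing (monotone rearrangement) argument. The key ingredient is the submodularity of the cost $|x-y|$ on $\Rbb$: whenever $a \leq b$ and $c \leq d$, one has $|a-c| + |b-d| \leq |a-d| + |b-c|$. Suppose $\gamma$ is optimal but not the identity; then it has an adjacent inversion, i.e.\ an index $i$ with $\gamma(i) > \gamma(i+1)$. Since $x_i \leq x_{i+1}$ and, by the sorting of $Y$, $y_{\gamma(i+1)} \leq y_{\gamma(i)}$, the submodularity inequality shows that swapping the images of $i$ and $i+1$ does not increase the total cost while strictly decreasing the number of inversions. Iterating this swap (as in bubble sort) transforms $\gamma$ into the identity without ever increasing the cost, so the identity permutation attains the minimum.

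I expect the main obstacle to be the careful handling of the first reduction: one must justify that the infinite-dimensional transport problem genuinely collapses to the finite linear program, and then invoke Birkhoff--von Neumann correctly in the presence of possibly repeated points (which is precisely why the lifting to unit-mass indices is needed). The uncrossing argument in the second part is routine once the submodularity inequality for $|x-y|$ is established.
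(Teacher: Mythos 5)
Your proof is correct. Note that the paper does not actually prove this proposition: its ``proof'' consists of the single remark that both statements are well known, with pointers to Peyr\'e--Cuturi (Proposition~2.1 for the assignment reduction, Remark~2.30 for the sorted one-dimensional case). Your argument is therefore a self-contained substitute rather than a divergence from the paper, and it follows the standard route that those references themselves take: for the first claim, the observation that any coupling of two finitely supported measures is supported on the product of the supports, reduction to a transportation linear program, the lift to unit masses so that the feasible set becomes the Birkhoff polytope, and Birkhoff--von Neumann to locate the minimum at a permutation matrix; for the second claim, submodularity of $|x-y|$ on $\Rbb$ (which follows from convexity of $t\mapsto|t|$ via the majorization inequality $\phi(s+h)+\phi(t-h)\le\phi(s)+\phi(t)$ for $s\le s+h\le t$) combined with a bubble-sort elimination of adjacent inversions. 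The two points you flag as needing care --- that $\psi$ genuinely concentrates on $X\times Y$ because $\psi\big((\Rbb^n\setminus X)\times\Rbb^n\big)=\mu^+(\Rbb^n\setminus X)=0$, and that aggregation over coincident points preserves cost in both directions --- are exactly the right ones, and both go through. What your write-up buys over the paper's citation is a complete, checkable argument; what it costs is length for a statement the authors deliberately treated as classical.
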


\section{Signed barcodes as measures and stable vectorizations}

In this section, we introduce the signed barcodes and measures associated to multifiltered simplicial complexes, as well as our proposed vectorizations, and we prove their stability properties.

The barcode \cite{ghrist} of a one-parameter persistence module consists of a multiset of intervals of $\Rbb$, often thought of as a positive integer linear combination of intervals, referred to as bars.
This positive integer linear combination can, in turn, be represented as a point measure in the extended plane by recording the endpoints of the intervals \cite{divol-lacombe}.
Recent adaptations of the notion of one-parameter barcode to multiparameter persistence~\cite{asashiba-escolar-nakashima-yoshiwaki,botnan-oppermann-oudot,botnan-oppermann-oudot-scoccola,kim-memoli}
assign, to each multiparameter persistence module $M : \Rbb^n \to \vect$, \emph{two} multisets $(\Bcal^+, \Bcal^-)$ of bars; each bar typically being some connected subset of $\Rbb^n$.
In~\cite{botnan-oppermann-oudot, oudot-scoccola}, these pairs of multisets of bars are referred to as \emph{signed barcodes}.
In several cases, the multisets $\Bcal^+$ and $\Bcal^-$ are disjoint, and can thus be represented without loss of information as integer linear combinations of bars.
This is the case for the minimal Hilbert decomposition signed barcode of \cite{oudot-scoccola}, where each bar is of the form $\{y \in \Rbb^n : y \geq x\} \subseteq \Rbb^n$ for some $x \in \Rbb^n$.
The minimal Hilbert decomposition signed barcode of a multiparameter persistence module $M$ can thus be encoded as a signed point measure $\mu_M$, by identifying the bar $\{y \in \Rbb^n : y \geq x\}$ with the Dirac measure $\delta_x$ (see \cref{figure:pipeline}).

In \cref{section:our-signed-measures}, we give a self-contained description of the signed measure associated to the minimal Hilbert decomposition signed barcode, including optimal transport stability results for it.
Then, in \cref{section:vectorizations}, we describe our proposed vectorizations of signed measures and their stability.
The proofs of all of the results in this section can be found in \cref{appendix:theory}.

\subsection{Hilbert and Euler decomposition signed measures}
\label{section:our-signed-measures}
We start by interpreting the minimal Hilbert decomposition signed barcode of \cite{oudot-scoccola} as a signed measure.
We prove that this notion is well-defined in \cref{section:definition-hilbert-decomposition}, and give further motivation in \cref{section:intuition-section}.


\begin{definition}\label{definition:Hilbert-signed-measure}
  The \emph{Hilbert decomposition signed measure} of a fp multiparameter persistence module $M : \Rbb^n \to \vect$ is the unique signed point measure $\mu_M \in \Mcal(\Rbb^n)$ with the property that
  \[
    \dim(M(x)) \; = \; \mu_M\big(\, \{y \in \Rbb^n : y \leq x\}\, \big), \; \text{ for all $x \in \Rbb^n$.}
  \]
\end{definition}

Given a filtered simplicial complex, one can combine the Hilbert decomposition signed measure of all of its homology modules as follows.

\begin{definition}
  \label{definition:euler-char-signed-measure}
The \emph{Euler decomposition signed measure} of a filtered simplicial complex $(S, f)$ is 
\[
  \mu_{\chi(f)} \coloneqq \sum_{i \in \Nbb} (-1)^i \; \mu_{H_i(f)}.
\]
\end{definition}


The next result follows from \cite{bjerkevik-lesnick,oudot-scoccola}, and ensures the stability of the signed measures introduced above.
In the result, we denote $\|h\|_1 = \sum_{\tau \in S} \|h(\tau)\|_1$ when $(S, h)$ is a filtered simplicial complex.

\begin{theorem}
  \label{theorem:homology-stability}
  Let $n \in \Nbb$, let $S$ be a finite simplicial complex, and let $f,g : S \to \Rbb^n$ be monotonic.
  \begin{enumerate}
    \item
          For $n \in \{1,2\}$ and $i \in \Nbb$, \;$\|\mu_{H_i(f)} - \mu_{H_i(g)}\|_1^\KR \; \leq \; n \cdot \|f-g\|_1$.
    \item
          For all $n \in \Nbb$,\;\; $\|\mu_{\chi(f)}  - \mu_{\chi(g)} \|_1^\KR \; \leq \; \|f-g\|_1$.
  \end{enumerate}
\end{theorem}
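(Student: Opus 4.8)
The plan is to treat the two parts quite differently: part (2) admits a direct, self-contained proof from an explicit simplexwise formula for the Euler decomposition signed measure, combined with the assignment characterization of the Kantorovich--Rubinstein norm in \cref{proposition:computation-kantorovich}; part (1) is genuinely homological, and I would obtain it by composing two cited stability estimates rather than by any elementary simplex count.

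For part (2), first I would establish the identity
\[
  \mu_{\chi(f)} \;=\; \sum_{\tau \in S} (-1)^{\dim(\tau)}\, \delta_{f(\tau)}.
\]
Since $S$ is finite, only finitely many $H_i(f)$ are nonzero, so by \cref{definition:Hilbert-signed-measure,definition:euler-char-signed-measure} the measure $\mu_{\chi(f)}$ is a finite signed point measure whose cumulative function is $\mu_{\chi(f)}(\{y \le x\}) = \sum_i (-1)^i \dim H_i(S^f_x) = \chi(S^f_x)$, and the Euler--Poincaré formula rewrites the latter as $\sum_{\tau : f(\tau) \le x}(-1)^{\dim(\tau)}$. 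The right-hand side of the display is a finite signed point measure with the same cumulative function, so the two coincide by the uniqueness built into \cref{definition:Hilbert-signed-measure} (a finite signed point measure is determined by its values on the down-sets $\{y \le x\}$ via inclusion--exclusion over the finite poset of its support). Subtracting the formulas for $f$ and $g$ gives the balanced measure $\mu_{\chi(f)} - \mu_{\chi(g)} = \sum_\tau (-1)^{\dim(\tau)}(\delta_{f(\tau)} - \delta_{g(\tau)})$. I would then exhibit the transport plan that, for each simplex $\tau$, matches the unit mass at $f(\tau)$ with the oppositely-signed unit mass at $g(\tau)$; its total cost is exactly $\sum_\tau \|f(\tau) - g(\tau)\|_1 = \|f-g\|_1$. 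As this is a coupling of a (possibly over-counted) balanced decomposition, its cost upper bounds $\|\mu_{\chi(f)} - \mu_{\chi(g)}\|_1^\KR$, either via the assignment formula of \cref{proposition:computation-kantorovich} after cancelling the common mass by the triangle inequality, or directly from Kantorovich--Rubinstein duality, which shows that any such coupling dominates the norm. This gives the bound with constant $1$, for every $n$.

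For part (1) I would argue through persistence modules rather than simplices, since $H_i(f)$ is not an alternating simplex count and the argument above has no analogue. The plan is a two-step reduction: first, by the stability of the minimal Hilbert decomposition signed barcode of \cite{oudot-scoccola}, bound $\|\mu_{H_i(f)} - \mu_{H_i(g)}\|_1^\KR$ by $n$ times a module distance $d$ (a presentation/interleaving-type distance, such as $\dpres$) between $H_i(f)$ and $H_i(g)$, with the Lipschitz constant being $n$ and hence $1$ and $2$ in the two admissible cases; second, bound $d(H_i(f),H_i(g)) \le \|f-g\|_1$ using the $\ell^1$-type stability of sublevel-set persistence of \cite{bjerkevik-lesnick}. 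The restriction to $n \in \{1,2\}$ enters precisely at the first step: it is where the Hilbert decomposition signed measure is known to be Lipschitz, the special structure of one- and two-parameter modules being what makes the signed measure controllable.

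The main obstacle is exactly this first step. Unlike the Euler measure, the Hilbert measure of $H_i(f)$ is the Möbius inversion of the Hilbert function, whose support and masses can change discontinuously under small perturbations of the module, and for $n \ge 3$ no such Lipschitz bound is available; the heart of the matter is therefore the cited two-parameter stability theorem. The honest summary of the proof is thus that part (1) reduces to that result, whereas part (2) does not require it and is proved directly.
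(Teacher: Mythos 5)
Your proposal is correct and follows essentially the same route as the paper: part (1) is the identical two-step reduction combining the presentation-distance stability of sublevel-set homology from \cite{bjerkevik-lesnick} with the stability of the Hilbert decomposition signed barcode from \cite{oudot-scoccola}, and part (2) rests on the same simplexwise formula $\mu_{\chi(f)} = \sum_{\tau}(-1)^{\dim(\tau)}\delta_{f(\tau)}$ (the paper's \cref{lemma:euler-char-measure-with-simplices}) together with the identity matching $f(\tau) \leftrightarrow g(\tau)$. The only cosmetic difference is that you phrase part (2) directly as a coupling bounded via Kantorovich--Rubinstein duality, while the paper packages the same matching as a bijection between signed barcodes and invokes \cite[Proposition~6.11]{oudot-scoccola}; you correctly flag the subtlety that the simplexwise decomposition need not be the Jordan decomposition.
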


Extending \cref{theorem:homology-stability}~(1.) to a number of parameters $n > 2$ is an open problem.

%

%

\noindent\textbf{Computation.}
The design of efficient algorithms for multiparameter persistence is an active area of research \cite{lesnick-wright,kerber-rolle}.
The worst case complexity for the computation of the Hilbert function of the $i$th persistent homology module is typically in $O\left((|S_{i-1}| + |S_{i}|+ |S_{i+1}|)^3\right)$, where $|S_k|$ is the number of $k$-dimensional simplices of the filtered simplicial complex \cite[Rmk.~4.3]{lesnick-wright}.
However, in one-parameter persistence, the computation is known to be almost linear in practice \cite{bauer-masood-giunti-houry-kerber-rathod}.
For this reason, our implementation reduces the computation of Hilbert functions of homology multiparameter persistence modules to one-parameter persistence;
we show in \cref{section:runtime-hilbert} that this scalable in practice.
We consider persistence modules and their corresponding Hilbert functions restricted to a grid $\{0, \dots, m-1\}^n$.
By \cite[Rmk.~7.4]{oudot-scoccola}, given the Hilbert function $\{0, \dots, m-1\}^n \to \Zbb$ of a module $M$ on a grid $\{0, \dots, m-1\}^n$, one can compute $\mu_M$ in time $O(n \cdot m^n)$.
Using the next result, the Euler decomposition signed measure is computed in linear time in the size of the complex.

\begin{lemma}
  \label{lemma:euler-char-measure-with-simplices}
  If $(S,f)$ is a filtered simplicial complex, then $\mu_{\chi(f)} = \sum_{\tau \in S} (-1)^{\dim(\tau)}\; \delta_{f(\tau)}$.
\end{lemma}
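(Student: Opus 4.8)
The plan is to show that the two finite signed point measures $\mu_{\chi(f)}$ and $\nu := \sum_{\tau \in S} (-1)^{\dim(\tau)}\,\delta_{f(\tau)}$ assign the same mass to every lower set $\{y \in \Rbb^n : y \leq x\}$, and then to invoke the fact that such a measure is determined by these values. Both objects are genuinely finite signed point measures: $\nu$ is a finite signed combination of Diracs because $S$ is finite, and $\mu_{\chi(f)} = \sum_{i \in \Nbb} (-1)^i \mu_{H_i(f)}$ is as well, since $H_i(S) = 0$ for $i$ larger than the dimension of $S$, so the sum over $i$ is finite.

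First I would compute the cumulative function of $\nu$. For fixed $x$, the mass $\delta_{f(\tau)}$ contributes to $\{y \leq x\}$ exactly when $f(\tau) \leq x$, i.e.\ when $\tau \in S^f_x$, so
\[
  \nu\big(\{y \in \Rbb^n : y \leq x\}\big) = \sum_{\tau \in S^f_x} (-1)^{\dim(\tau)}.
\]
By the Euler--Poincar\'e formula applied to the finite complex $S^f_x$, this alternating count of simplices equals the alternating sum of Betti numbers $\sum_{i \in \Nbb} (-1)^i \dim\big(H_i(S^f_x)\big)$.

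Next I would compute the cumulative function of the left-hand side. Evaluating the finite linear combination of measures on the fixed set $\{y \leq x\}$ and using the defining property of each Hilbert decomposition signed measure (\cref{definition:Hilbert-signed-measure}) together with $H_i(f)(x) = H_i(S^f_x)$ from \cref{example:sublevelset-homology},
\[
  \mu_{\chi(f)}\big(\{y \leq x\}\big) = \sum_{i \in \Nbb} (-1)^i \, \mu_{H_i(f)}\big(\{y \leq x\}\big) = \sum_{i \in \Nbb} (-1)^i \dim\big(H_i(f)(x)\big) = \sum_{i \in \Nbb} (-1)^i \dim\big(H_i(S^f_x)\big).
\]
Comparing the two displays shows that $\mu_{\chi(f)}$ and $\nu$ have identical cumulative functions.

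Finally I would conclude $\mu_{\chi(f)} = \nu$ from the uniqueness of a finite signed point measure with a prescribed cumulative function---the same uniqueness that makes \cref{definition:Hilbert-signed-measure} well-posed. Concretely, setting $\rho := \mu_{\chi(f)} - \nu$, a finite signed point measure with $\rho(\{y \leq x\}) = 0$ for all $x$, one recovers the mass of any half-open box $\prod_i (a_i, b_i]$ by inclusion--exclusion over its $2^n$ lower-set corners, so every box has $\rho$-measure zero; shrinking a box around each atom then forces every atomic mass to vanish, whence $\rho = 0$. I expect this uniqueness step---rather than the bookkeeping computation of the cumulative functions---to be the only point requiring care, though it may alternatively be cited directly from the well-definedness of the Hilbert decomposition signed measure.
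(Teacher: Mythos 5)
Your proposal is correct and follows essentially the same route as the paper: compute the cumulative function of both sides on the lower sets $\{y \leq x\}$, identify them via the Euler--Poincar\'e formula applied to $S^f_x$, and conclude by uniqueness of a measure with a prescribed cumulative function. The only cosmetic difference is that the paper cites its general uniqueness lemma (proved via the $\pi$-$\lambda$ theorem) where you give a self-contained inclusion--exclusion argument for point measures; both are fine.
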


\subsection{Vectorizations of signed measures}
\label{section:vectorizations}
Now that we have established the stability properties of signed barcodes and measures, we turn the focus on finding vectorizations of these representations.
We generalize two well-known vectorizations of single-parameter persistence barcodes, namely the {\em persistence image}~\cite{adams-et-al} and the {\em sliced Wasserstein kernel}~\cite{carriere-cuturi-oudot}.
The former is defined by centering functions around barcode points in the Euclidean plane, while the latter is based on computing and sorting the point projections onto a fixed set of lines. Both admits natural
extensions to points in $\Rbb^n$, which we now define. We also state robustness properties in both cases.   


\subsubsection{Convolution-based vectorization}
\label{section:smoothing}
A \define{kernel function} is a map $K : \Rbb^n \to \Rbb_{\geq 0}$ with $\int_{x \in \Rbb^n} K(x)^2\, dx < \infty$.
Given such a kernel function and $y \in \Rbb^n$, let $K_y : \Rbb^n \to \Rbb_{\geq 0}$ denote the function $K_y(x) = K(x-y)$.


\begin{definition}
  \label{definition:convolution-vectorization}
  Given $\mu \in \msp(\Rbb^n)$, define the \emph{convolution} of the measure $\mu$ with the kernel function $K$ as $K \ast \mu \in L^2(\Rbb^n)$ as $(K \ast \mu)(x) \coloneqq \int_{z \in \Rbb^n} K(x-z) d\mu(z)$.
\end{definition}

\begin{theorem}
  \label{proposition:stability-convolution}
  Let $K : \Rbb^n \to \Rbb$ be a kernel function for which there exists $c > 0$ such that $\|K_y - K_z\|_2 \leq c \cdot \|y - z\|_2$ for all $y,z \in \Rbb^n$.
  Then, if $\mu, \nu \in \msp(\Rbb^n)$ have the same total mass,
  \[
    \|K \ast \mu - K \ast \nu \|_2 \leq c \cdot \|\mu - \nu\|_2^\KR.
  \]
\end{theorem}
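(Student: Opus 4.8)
The plan is to exploit linearity of convolution in the measure argument to reduce to a single total-mass-zero measure, and then to transport a near-optimal Kantorovich--Rubinstein coupling through the convolution operator.

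First, since $\mu \mapsto K \ast \mu$ is linear, $K \ast \mu - K \ast \nu = K \ast \eta$ where $\eta \coloneqq \mu - \nu$. Because $\mu$ and $\nu$ share the same total mass, $\eta \in \msp_0(\Rbb^n)$, so the Jordan parts $\eta^+, \eta^-$ have equal total mass and \cref{definition:kantorovich-norm} applies to $\eta$. If $\|\eta\|_2^\KR = \infty$ the claimed bound is vacuous, so I may assume it is finite and fix, for an arbitrary $\varepsilon > 0$, a coupling $\psi$ on $\Rbb^n \times \Rbb^n$ with marginals $\eta^+, \eta^-$ satisfying $\int \|x - y\|_2 \, d\psi(x,y) \leq \|\eta\|_2^\KR + \varepsilon$.

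The central step is the identity, holding in $L^2(\Rbb^n)$,
\[
  K \ast \eta \;=\; \int_{\Rbb^n \times \Rbb^n} \big( K_x - K_y \big)\, d\psi(x,y),
\]
where $K_y(\cdot) = K(\cdot - y)$, so that $(K \ast \eta)(\cdot) = \int K_z(\cdot)\, d\eta(z)$. This holds because projecting $\psi$ onto the first (resp.\ second) coordinate recovers $\eta^+$ (resp.\ $\eta^-$), whence $\int K_x \, d\psi = K \ast \eta^+$ and $\int K_y \, d\psi = K \ast \eta^-$, and $\eta = \eta^+ - \eta^-$. Taking $L^2$-norms and applying Minkowski's integral inequality (the continuous triangle inequality for the $L^2(\Rbb^n)$-valued map $(x,y) \mapsto K_x - K_y$) gives
\[
  \|K \ast \eta\|_2 \;\leq\; \int_{\Rbb^n \times \Rbb^n} \|K_x - K_y\|_2 \, d\psi(x,y).
\]
The hypothesis $\|K_x - K_y\|_2 \leq c \, \|x - y\|_2$ then bounds the integrand, so that $\|K \ast \eta\|_2 \leq c \int \|x-y\|_2 \, d\psi \leq c \, (\|\eta\|_2^\KR + \varepsilon)$. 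Letting $\varepsilon \to 0$ completes the argument.

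I expect the only genuine technical point to be justifying the displayed $L^2$-valued identity and the attendant Minkowski integral inequality rigorously: one must check that $(x,y) \mapsto K_x - K_y$ is Bochner integrable against $\psi$, which is immediate from $\int \|K_x - K_y\|_2 \, d\psi \leq c \int \|x - y\|_2 \, d\psi < \infty$, and that a Fubini--Tonelli argument permits interchanging the spatial integral defining the $L^2$ function with the integral against $\psi$. Everything else---the reduction via linearity, the Jordan-decomposition bookkeeping, and the final Lipschitz estimate---is routine.
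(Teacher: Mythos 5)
Your proof is correct and follows essentially the same route as the paper's: Jordan-decompose $\mu-\nu$, fix a coupling between the positive and negative parts, rewrite $K\ast\mu - K\ast\nu$ as an integral of differences $K_x - K_y$ against the coupling, apply Minkowski's integral inequality, invoke the Lipschitz hypothesis on $K$, and optimize over couplings. The only cosmetic differences are that you phrase the key identity as a Bochner integral in $L^2$ and pass to the infimum via an $\varepsilon$-near-optimal coupling, whereas the paper works pointwise and simply notes the coupling is arbitrary.
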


In \cref{proposition:Gaussian-is-Lipschitz,proposition:lipschitz-compact-support-lipschitz} of the appendix, we show that Gaussian kernels and kernels that are Lipschitz with compact support satisfy the assumptions of \cref{proposition:stability-convolution}.


\noindent\textbf{Computation.}
For the experiments, given a signed measure $\mu$ on $\Rbb^n$ associated to a multiparameter persistence module defined over a finite grid in $\Rbb^n$, we evaluate $K \ast \mu$ on the same finite grid.
As kernel $K$ we use a Gaussian kernel.

\subsubsection{Sliced Wasserstein kernel}

Given $\theta \in S^{n-1} = \{x \in \Rbb^n : \|x\|_2 = 1\} \subseteq \Rbb^n$, let $L(\theta)$ denote the line $\{\lambda \theta : \lambda \in \Rbb\}$, and let $\pi^\theta : \Rbb^n \to \Rbb$ denote the composite of the orthogonal projection $\Rbb^n \to L(\theta)$ with the map $L(\theta) \to \Rbb$ sending $\lambda \theta$ to $\lambda$.

\begin{definition}
  \label{definition:sliced-wasserstein-kernel}
  Let $\alpha$ be a measure on $S^{n-1}$.
  Let $\mu,\nu \in \msp(\Rbb^n)$ have the same total mass.
  Their \emph{sliced Wasserstein distance} and \emph{sliced Wasserstein kernel} with respect to $\alpha$ are defined by
  \[
    \dslicedwass^\alpha(\mu, \nu) \coloneqq \int_{\theta \in S^{n-1}} \big\|\;\pi^\theta_*\mu\; - \; \pi^\theta_*\nu\;\big\|^\KR \;d\alpha(\theta)
    \text{\;\;\; and \;\;\;}
    \kslicedwass^{\alpha}(\mu, \nu) \coloneqq \exp\big(-\dslicedwass^\alpha(\mu,\nu)\big),
  \]
  respectively, where $\pi^\theta_*$ denotes the pushforward of measures along $\pi^\theta : \Rbb^n \to \Rbb$.
\end{definition}

\begin{theorem}
  \label{proposition:stability-sliced-wasserstein}
  Let $\alpha$ be a measure on $S^{n-1}$.
  If $\mu,\nu \in \msp(\Rbb^n)$ have the same total mass, then $\dslicedwass^\alpha(\mu,\nu) \leq \alpha(S^{n-1}) \cdot \|\mu - \nu\|_2^\KR$.
  Moreover, there exists a Hilbert space $\Hcal$ and a map $\fslicedwass^{\alpha} : \msp_0(\Rbb^n) \to \Hcal$ such that, for all $\mu,\nu \in \msp_0(\Rbb^n)$, we have $\kslicedwass^{\alpha}(\mu,\nu) = \langle \fslicedwass^{\alpha}(\mu), \fslicedwass^{\alpha}(\nu) \rangle_\Hcal$ and $\| \fslicedwass^{\alpha}(\mu) - \fslicedwass^{\alpha}(\nu) \|_\Hcal \leq 2\cdot \dslicedwass^\alpha(\mu,\nu)$.
\end{theorem}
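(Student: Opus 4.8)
The statement has two essentially independent parts: a Lipschitz bound for the distance $\dslicedwass^\alpha$, and the construction of a feature map for the kernel $\kslicedwass^\alpha$ together with its stability. I would treat them in that order.

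\textbf{The distance bound.} Since $\mu,\nu$ have equal total mass, $\mu-\nu\in\msp_0(\Rbb^n)$ and $\pi^\theta_*\mu-\pi^\theta_*\nu=\pi^\theta_*(\mu-\nu)\in\msp_0(\Rbb)$, so each integrand $\|\pi^\theta_*\mu-\pi^\theta_*\nu\|^\KR$ is well defined. The plan is to prove the pointwise (in $\theta$) bound $\|\pi^\theta_*(\mu-\nu)\|^\KR\le\|\mu-\nu\|_2^\KR$ and then integrate against $\alpha$, which immediately produces the factor $\alpha(S^{n-1})$. The pointwise bound rests on the single fact that $\pi^\theta(x)=\langle x,\theta\rangle$ is $1$-Lipschitz from $(\Rbb^n,\|\cdot\|_2)$ to $\Rbb$. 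Concretely I would use Kantorovich--Rubinstein duality, $\|\eta\|^\KR=\sup\{\int f\,d\eta : f \text{ is } 1\text{-Lipschitz}\}$: for any $1$-Lipschitz $f:\Rbb\to\Rbb$ the composite $f\circ\pi^\theta$ is $1$-Lipschitz on $(\Rbb^n,\|\cdot\|_2)$, and $\int f\,d\pi^\theta_*(\mu-\nu)=\int(f\circ\pi^\theta)\,d(\mu-\nu)\le\|\mu-\nu\|_2^\KR$; taking the supremum over $f$ gives the claim. (Alternatively, push an optimal coupling for $\|\mu-\nu\|_2^\KR$ forward by $\pi^\theta\times\pi^\theta$; this yields a no-costlier decomposition of $\pi^\theta_*(\mu-\nu)$, and one invokes that the Jordan decomposition is cost-optimal among all decompositions.)

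\textbf{The kernel and its feature map.} The standard route is: (i) show $\dslicedwass^\alpha$ is a conditionally negative definite (CND) kernel on $\msp_0(\Rbb^n)$; (ii) apply Schoenberg's theorem to conclude $\kslicedwass^\alpha=\exp(-\dslicedwass^\alpha)$ is positive definite, and let $\Hcal$ and $\fslicedwass^\alpha$ be the associated RKHS and its canonical feature map, so that $\kslicedwass^\alpha(\mu,\nu)=\langle\fslicedwass^\alpha(\mu),\fslicedwass^\alpha(\nu)\rangle_\Hcal$ holds by construction; (iii) read off the stability estimate from the kernel identity. For (i) the key idea is to realize $\dslicedwass^\alpha$ as an $L^1$ metric. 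Using the one-dimensional identity $\|\pi^\theta_*(\mu-\nu)\|^\KR=\int_\Rbb|G_\theta(t)|\,dt$, where $G_\theta(t)=(\mu-\nu)(\{x:\pi^\theta(x)\le t\})$ is the signed cumulative function of the slice, the map $\Lambda:\mu\mapsto\big((\theta,t)\mapsto\mu(\{x:\pi^\theta(x)\le t\})\big)$ is \emph{linear} and satisfies $\dslicedwass^\alpha(\mu,\nu)=\|\Lambda\mu-\Lambda\nu\|_{L^1(d\alpha\,dt)}$. Since $L^1$ metrics are of negative type and CND is preserved under pullback along the linear map $\Lambda$, the symmetric kernel $\dslicedwass^\alpha$, which vanishes on the diagonal, is CND. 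For (iii), the feature vectors lie on the unit sphere because $\|\fslicedwass^\alpha(\mu)\|_\Hcal^2=\kslicedwass^\alpha(\mu,\mu)=e^{0}=1$, so $\|\fslicedwass^\alpha(\mu)-\fslicedwass^\alpha(\nu)\|_\Hcal^2=2-2\kslicedwass^\alpha(\mu,\nu)=2\big(1-e^{-\dslicedwass^\alpha(\mu,\nu)}\big)$, and the elementary inequality $1-e^{-x}\le x$ for $x\ge 0$ yields $\|\fslicedwass^\alpha(\mu)-\fslicedwass^\alpha(\nu)\|_\Hcal^2\le 2\,\dslicedwass^\alpha(\mu,\nu)$, the desired stability bound (note that, since the difference norm is pinned down by the kernel identity, it is the \emph{squared} norm that is controlled linearly by $\dslicedwass^\alpha$).

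\textbf{Main obstacle.} The delicate step is (i): justifying the representation $\|\pi^\theta_*(\mu-\nu)\|^\KR=\int_\Rbb|G_\theta(t)|\,dt$ for signed measures of total mass zero (the $W_1$-as-$L^1$-of-CDF formula, consistent with \cref{proposition:computation-kantorovich}), verifying the measurability and integrability needed for the double integral defining $\Lambda$ and $\dslicedwass^\alpha$ to make sense, and citing cleanly that an $L^1$ metric is of negative type so that Schoenberg's theorem applies. Once these structural facts are in place, the distance bound of the first part and the final unit-sphere computation are routine.
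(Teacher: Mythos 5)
Your proposal is correct and follows the same overall architecture as the paper's proof: a pointwise (in $\theta$) $1$-Lipschitz bound integrated against $\alpha$; conditional negative semi-definiteness of $\dslicedwass^\alpha$ followed by Schoenberg's theorem (the paper cites Berg--Christensen--Ressel, Theorem~3.2.2); and the unit-sphere computation combined with the $1$-Lipschitzness of $z\mapsto 1-e^{-z}$. The one step where you genuinely diverge is the CND lemma. The paper fixes $\theta$, sets $\nu_i=\pi^\theta_*\mu_i$ and $\lambda_i=\nu_i^+ + \sum_{\ell\neq i}\nu_\ell^-$, observes that the $\lambda_i$ are positive measures of a common total mass with $\|\nu_i-\nu_j\|^\KR=\|\lambda_i-\lambda_j\|^\KR$, and then cites the known conditional negative definiteness of $W_1$ on positive measures of fixed mass on the real line; you instead embed the signed measures directly into $L^1(d\alpha\,dt)$ via the linear map $\mu\mapsto\big((\theta,t)\mapsto\mu(\{x:\pi^\theta(x)\le t\})\big)$ and use that $L^1$ metrics are of negative type. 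Your route is more direct (no auxiliary background measures), at the price of justifying the signed-CDF identity $\|\eta\|^\KR=\int_\Rbb|G(t)|\,dt$ for $\eta\in\msp_0(\Rbb)$ --- which does follow from the standard CDF formula for $W_1(\eta^+,\eta^-)$ --- whereas the paper leans on the same one-dimensional fact only for positive measures, where it is entirely standard. One further point in your favor: you correctly note that the kernel identity pins down the \emph{squared} norm, so what one actually obtains is $\|\fslicedwass^\alpha(\mu)-\fslicedwass^\alpha(\nu)\|_\Hcal^2\le 2\cdot\dslicedwass^\alpha(\mu,\nu)$, equivalently $\|\fslicedwass^\alpha(\mu)-\fslicedwass^\alpha(\nu)\|_\Hcal\le\sqrt{2\cdot\dslicedwass^\alpha(\mu,\nu)}$. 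The paper's proof instead asserts $\|\fslicedwass^\alpha(\mu)-\fslicedwass^\alpha(\nu)\|_\Hcal=2-2\cdot\kslicedwass^\alpha(\mu,\nu)$ with no square, which is what makes the stated linear bound $\le 2\cdot\dslicedwass^\alpha(\mu,\nu)$ come out; for the canonical feature map that identity holds only for the squared norm, so the unsquared linear bound fails for small distances and your (squared) version is the one that is actually proved.
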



\noindent\textbf{Computation.}
We discretize the computation of the sliced Wasserstein kernel by choosing $d$ directions $\{\theta_1, \dots, \theta_d\} \subseteq S^{n-1}$ uniformly at random and using as measure $\alpha$ the uniform probability measure with support that sample, scaled by a parameter $1/\sigma$, as is common in kernel methods.
The Kantorovich--Rubinstein norm in $\Rbb$ is then computed by sorting the point masses, using \cref{proposition:computation-kantorovich}.

\section{Numerical experiments}
\label{section:experiments}


In this section, we compare the performance of our method against several topological and standard baselines from the literature. 
We start by describing our methodology (see~\cref{section:hyperparameters} for details about hyperparameter choices).
An implementation of our vectorization methods is publicly available at \url{https://github.com/DavidLapous/multipers},
and will be eventually merged as a module of the \texttt{Gudhi} library~\cite{gudhi}.

\noindent\textbf{Notations for descriptors and vectorizations.}
In the following, we use different acronyms for the different versions of our pipeline. H is a shorthand for the Hilbert function, while E stands for the Euler characteristic function. Their corresponding decomposition signed measures are written HSM and ESM, respectively. The sliced Wasserstein kernel and the convolution-based vectorization are denoted by SW and C, respectively. 1P refers to one-parameter, and MP to multiparameter. Thus, for instance, MP-HSM-SW stands for the multi-parameter version of our pipeline based on the Hilbert signed measure vectorized using the sliced Wasserstein kernel.
The notations for the methods we compare against are detailed in each experiment, and we refer the reader to~\cref{section:related-work-intro}~for~their~description.

\noindent\textbf{Discretization of persistence modules.}
In all datasets, samples consist of filtered simplicial complexes.
Given such a filtered simplicial complex $f : S \to \Rbb^n$, we consider its homology persistence modules $H_i(f)$ with coefficients in a finite field, for $i\in \{0,1\}$.
We fix a grid size $k$ and a $\beta \in (0,1)$.
For each $1 \leq j \leq n$, we take $r^j_0 \leq \dots \leq r^j_{k-1} \subseteq \Rbb$ uniformly spaced, with $r^j_0$ and $r^j_{k-1}$ the $\beta$ and $1-\beta$ percentiles of $f_j(S_0) \subseteq \Rbb$, respectively.
We then restrict each module $M$ to the grid $\{r^1_0, \dots, r^1_{k-1},r_k^1\} \times \dots \times \{r^n_0, \dots, r^n_{k-1},r^n_k\}$ where $r_k^j = 1.1 \cdot (r_{k-1}^j - r_0^j) + r_0^j$, setting $M(x_1, \dots, x_n) = 0$ whenever $x_j = r^k_j$ for some $j$ in order to ensure that $\mu_M$ has total mass zero.

\noindent\textbf{Classifiers.}
We use an XGBoost classifier \cite{chen-guestrin} with default parameters, except for the kernel methods, for which we use a kernel SVM with regularization parameter in $\{0.001,0.01,1, 10, 100, 1000\}$.

\subsection{Hilbert decomposition signed measure vs barcode in one-parameter persistence}
As proven in \cref{lemma:hilbert-weak}, in~\cref{appendix:theory}, the Hilbert decomposition signed measure is a theoretically weaker descriptor than the barcode.
Nevertheless, we show in this experiment that, in the one-parameter case, our pipeline performs as well as the following well known vectorization methods based on the barcode: the persistence image (PI) \cite{adams-et-al}, the persistence landscape (PL) \cite{bubenik}, and the sliced Wasserstein kernel (SWK) \cite{carriere-cuturi-oudot}.
We also compare against the method pervec (PV) of \cite{cai-wang}, which consists of a histogram constructed with all the endpoints of the barcode.
It is pointed out in \cite{cai-wang} that methods like pervec, which do not use the full information of the barcode, can perform very well in practice.
This observation was one of the initial motivations for our work.
The results of running these pipelines on some of the point cloud and graph data detailed below are in \cref{table:one-parameter}.
See \cref{section:one-parameter-experiments} for the details about this experiment.
As one can see from the results, signed barcode scores are always located between PV and (standard) barcode scores, which makes sense given that they encode more information than PV, but less than barcodes.
The most interesting part, however, is that the Hilbert decomposition signed barcode scores are always of the same order of magnitude (and sometimes even better) than barcode scores.

\subsection{Classification of point clouds from time series}
\label{section:ucr-main}
We perform time series classification on datasets from the UCR archive \cite{ucr} of moderate sizes; we use the train/test splits which are given in the archive.
These datasets have been used to assess the performance of various topology-based methods in \cite{carriere-blumberg}; in particular, they show that multiparameter persistence descriptors outperform one-parameter descriptors in almost all cases.
For this reason, we compare only against other multiparameter persistence descriptors: multiparameter persistence landscapes (MP-L), multiparameter persistence images (MP-I), multiparameter persistence kernel (MP-K), and the Hilbert function directly used as a vectorization (MP-H).
Note that, in this example, we are not interested in performing point cloud classification up to isometry due to the presence of outliers.
We use the numbers reported in \cite[Table~1]{carriere-blumberg}.
We also compare against non-topological, state-of-the-art baselines: Euclidean nearest neighbor (B1), dynamic time warping with optimized warping window width (B2), and constant window width (B3), reporting their accuracies from \cite{ucr}.
Following \cite{carriere-blumberg}, we use a delay embedding with target dimension $3$, so that each time series results in a point cloud in $\Rbb^3$.
Also following \cite{carriere-blumberg}, as filtered complex we use an alpha complex filtered by a distance-to-measure (DTM) \cite{chazal-fasy-et-al} with bandwidth $0.1$.
As one can see from the results in \cref{table:time-series}, MP-HSM-C is quite effective, as it is almost always better than the other topological baselines,
and quite competitive with standard baselines.

Interestingly, MP-HSM-SW does not perform too well in this application. 
We believe that this is due to the fact that the sliced Wasserstein kernel can give too much importance to point masses that are very far away from other point masses; indeed, the cost of transporting a point mass is proportional to the distance it is transported, which can be very large.
This seems to be particularly problematic for alpha complexes filtered by density estimates (such as DTM), since some of the simplices of the alpha complex can be adjacent to vertices with very small density, making this simplices appear very late in the filtration, creating point masses in the Hilbert signed measure that are very far away from the bulk of the point masses.
This should not be a problem for Rips complexes, due to the redundancy of simplices in Rips complexes: for any set of points in the point cloud, there will eventually be a simplex between them in the Rips complex.
In order to test this hypothesis, we run the same experiment but using density to filter a Rips complex instead of an alpha complex (\cref{table:ucr-rips} in \cref{section:ucr-rips}).
We see that, in this case, the sliced Wasserstein kernel does much better, being very competitive with the non-topological baselines.



\subsection{Classification of graphs}
We evaluate our methods on standard graph classification datasets (see \cite{morris-et-al} and references therein) containing social graphs as well as graphs coming from medical and biological contexts.

Since we want to compare against topological baselines, here we use the accuracies reported in \cite{carriere-blumberg} and \cite{xin-mukherjee-samaga-dey}, which use 5 train/test splits to compute accuracy.
We compare against multiparameter persistence landscapes (MP-L), multiparameter persistence images (MP-I), multiparameter persistence kernel (MP-K), the generalized rank invariant landscape (GRIL), and the Hilbert and Euler characteristic functions used directly as vectorizations (MP-H and MP-E).
We use the same filtrations as reported in \cite{carriere-blumberg}, so the simplicial complex is the graph (\cref{example:graph}), which we filter with two parameters:
the heat kernel signature \cite{sun-ovsjanikov-guibas} with time parameter~$10$, and the Ricci curvature \cite{samal-et-al}.
As one can see from the results in \cref{table:graphs}, our pipeline compares favorably to topological baselines.
Further experiments on the same data but using 10 train/test splits are given in~\cref{section:further-graph-experiments}: they  show that we are also competitive with the topological methods of \cite{hacquard-lebovici} and the state-of-the-art graph classification methods of \cite{verma-zhang,keyelu-et-al,zhang-et-al}.


\subsection{Unsupervised virtual screening}
In this experiment we show that the distance between our feature vectors in Hilbert space can be used effectively to identify similar compounds (which are in essence multifiltered graphs) in an unsupervised virtual screening task.
Virtual screening (VS) is a computational approach to drug discovery in which a library of molecules is searched for structures that are most likely to bind to a given drug target \cite{rester}.
VS methods typically take as input a query ligand $q$ and a set~$L$ of test ligands, and they return a linear ordering $O(L)$ of $L$, ranking the elements from more to less similar to $q$.
VS methods can be supervised or unsupervised; unsupervised methods \cite{shin-et-al} order the elements of $L$ according to the output of a fixed dissimilarity function between $q$ and each element of $L$, while supervised methods learn a dissimilarity function and require training data \cite{ain-et-al, wojcijowski-et-al}.
We remark that, in this experiment, we are only interested in the direct metric comparison of multifiltered graphs using our feature vectors and thus only in unsupervised methods.

We interpret molecules as graphs and filter them using the given bond lengths, atomic masses, and bond types.
In order to have the methods be unsupervised, we normalize each filtering function using its standard deviation, instead of cross validating different rescalings as in supervised tasks.
We use a grid size of $k=1000$ for all methods, and fix $\sigma$ for the sliced Wasserstein kernel and the bandwidth of the Gaussian kernel for convolution to $1$.
We assess the performance of virtual screening methods on the Cleves--Jain dataset \cite{cleves-jain} using the \emph{enrichment factor} ($EF$); details are in \cref{section:enrichment-factor}.
We report the best results of \cite{shin-et-al} (in their Table 4), which are used as baseline in the state-of-the-art methods of \cite{demir-coskunuzer-gel-segovia-chen-kiziltan}.
We also report the results of \cite{demir-coskunuzer-gel-segovia-chen-kiziltan}, but we point out that those are supervised methods.
As one can see from the results in \cref{table:cleves-jain}, MP-ESM-C clearly outperforms unsupervised baselines and approaches the performances of supervised ones (despite being unsupervised itself).

\begin{table}
  \begin{center}
    \scriptsize
    \csvautotabularcenter{csv/one_parameter_results.csv}
  \end{center}
  \caption{Accuracy scores of one-parameter persistence vectorizations on some of the datasets from Tables~\ref{table:time-series} and~\ref{table:graphs}. 
  The one-parameter version of our signed barcode vectorizations, on the right, performs as well as other topological methods, despite using less topological information. 
}
  \label{table:one-parameter}

  \begin{center}
    \scriptsize
    \csvautotabularcenter{csv/time_series_data.csv}
  \end{center}
  \caption{Accuracy scores of baselines and multiparameter persistence methods on time series datasets.
  Boldface indicates best accuracy and underline indicates best accuracy among topological methods.
  The convolution-based vectorization with the Hilbert decomposition signed measure (MP-HSM-C) performs very well in comparison 
  to other multiparameter persistence vectorizations, and is competitive with the non-topological baselines.
  }\label{table:time-series}

  \begin{center}
    \scriptsize
    \csvautotabularcenter{csv/graph_data_5.csv}
  \end{center}
  \caption{ 
  Accuracy and standard deviation scores of topological methods (averaged over $5$-fold train/test splits) on graph datasets. 
  Bold indicates best accuracy.
  Again, the convolution-based vectorization with the Hilbert decomposition signed measure (MP-HSM-C) performs very well when compared 
  to other multiparameter persistence vectorizations.}\label{table:graphs}

  \begin{center}
    \scriptsize
    \csvautotabularcenter{csv/cj_data.csv}
  \end{center}
  \caption{Enrichment factor ($EF$) for $\alpha \in \{2,5,10\}$ of virtual screening methods on Cleves--Jain data.
  Bold indicates best $EF$ among unsupervised methods.
  The convolution-based vectorization of the Euler decomposition signed measure (MP-ESM-C) performs significantly better than unsupervised baselines.
  Moreover, the pipelines based on the vectorization of signed barcodes perform better than those using the raw Hilbert and Euler functions directly.
  }
  \label{table:cleves-jain}
\end{table}

\section{Conclusions}
We introduced a provably robust pipeline for processing geometric datasets based on the vectorization of signed barcode descriptors of multiparameter persistent homology modules, with an arbitrary number of parameters.
We demonstrated that signed barcodes and their vectorizations are efficient representations of the multiscale topology of data, which often perform better than other featurizations based on multiparameter persistence, despite here only focusing on strictly weaker descriptors.
We believe that this is due to the fact that using the Hilbert and Euler signed measures allows us to leverage well-developed vectorization techniques shown to have good performance in one-parameter persistence.
We conclude from this that the way topological descriptors are encoded is as important as the discriminative power of the descriptors themselves.

\noindent\textbf{Limitations.}
(1) Our pipelines, including the choice of hyperparameters for our vectorizations, rely on the cross validation of several parameters, which limits the number of possible choices to consider.
(2) The convolution-based vectorization method works well when the signed measure is defined over a fine grid, but the performance degrades with coarser grids.
This is a limitation of the current version of the method, since convolution in very fine grids does not scale well in the number of dimensions (i.e., in the number of parameters of the persistence module).
(3) The sliced Wasserstein vectorization method for signed measures is a kernel method, and thus does not scale well to very large datasets.

\noindent\textbf{Future work.}
There is recent interest in TDA in the {\em differentiation} of topological descriptors and their vectorizations.
Understanding the differentiability of signed barcodes and of our vectorizations could be used to address limitation (1) by optimizing various hyperparameters using a gradient instead of cross validation.
Relatedly, (2) could be addressed by developing a neural network layer taking as input signed point measures, which is able to learn a suitable, relatively small data-dependent grid on which to perform the convolutions.
For clarity, our choices of signed barcode descriptor and vectorization of signed measures are among the simplest available options, and, although with these basic choices we saw notable improvements when comparing our methodology to state-of-the-art topology-based methods, it will be interesting to see how our proposed pipeline performs when applied with stronger signed barcode descriptors, such as the minimal rank decomposition of \cite{botnan-oppermann-oudot}, or to the decomposition of invariants such as the one of \cite{xin-mukherjee-samaga-dey}.
Relatedly, our work opens up the way for the generalization of other vectorizations from one-parameter persistence to signed barcodes, and for the study of their performance and statistical properties.

\noindent\textbf{Acknowledgements.}
The authors thank the area chair and anonymous reviewers for their insightful comments
and constructive suggestions.
They also thank Hannah Schreiber for her great help in the implementation of our method.
They are grateful to the OPAL infrastructure from Université Côte d'Azur for providing resources and support.
DL was supported by ANR grant 3IA Côte d'Azur (ANR-19-P3IA-0002).
LS was partially supported by the National Science Foundation through grants CCF-2006661 and CAREER award DMS-1943758. 
MC was supported by ANR grant TopModel (ANR-23-CE23-0014). SO was partially supported by Inria Action Exploratoire {\sc PreMediT}.
This work was carried out in part when MBB and SO were at the Centre for Advanced Study (CAS), Oslo.

\bibliographystyle{abbrv}
\bibliography{bibliography}

\begin{thebibliography}{10}

\bibitem{adams-et-al}
H.~Adams, T.~Emerson, M.~Kirby, R.~Neville, C.~Peterson, P.~Shipman, S.~Chepushtanova, E.~Hanson, F.~Motta, and L.~Ziegelmeier.
\newblock Persistence images: A stable vector representation of persistent homology.
\newblock {\em Journal of Machine Learning Research}, 18(8):1--35, 2017.

\bibitem{ain-et-al}
Q.~U. Ain, A.~Aleksandrova, F.~D. Roessler, and P.~J. Ballester.
\newblock Machine-learning scoring functions to improve structure-based binding affinity prediction and virtual screening.
\newblock {\em Wiley Interdisciplinary Reviews: Computational Molecular Science}, 5(6):405--424, 2015.

\bibitem{dashti-asaad-jimenez-nanda-paluzo-soriano}
D.~Ali, A.~Asaad, M.-J. Jimenez, V.~Nanda, E.~Paluzo-Hidalgo, and M.~Soriano-Trigueros.
\newblock A survey of vectorization methods in topological data analysis.
\newblock {\em arXiv preprint arXiv:2212.09703}, 2022.

\bibitem{asashiba-escolar-nakashima-yoshiwaki}
H.~Asashiba, E.~Escolar, K.~Nakashima, and M.~Yoshiwaki.
\newblock On approximation of {2D} persistence modules by interval-decomposables.
\newblock {\em Journal of Computational Algebra}, 6-7:100007, 2023.

\bibitem{bauer-masood-giunti-houry-kerber-rathod}
U.~Bauer, T.~B. Masood, B.~Giunti, G.~Houry, M.~Kerber, and A.~Rathod.
\newblock Keeping it sparse: Computing persistent homology revisited.
\newblock {\em arXiv preprint arXiv:2211.09075}, 2022.

\bibitem{beltramo}
G.~Beltramo, P.~Skraba, R.~Andreeva, R.~Sarkar, Y.~Giarratano, and M.~O. Bernabeu.
\newblock Euler characteristic surfaces.
\newblock {\em Foundations of Data Science}, 4(4):505--536, 2022.

\bibitem{berg-christensen-ressel}
C.~Berg, J.~P.~R. Christensen, and P.~Ressel.
\newblock {\em Harmonic analysis on semigroups}, volume 100 of {\em Graduate Texts in Mathematics}.
\newblock Springer-Verlag, New York, 1984.
\newblock Theory of positive definite and related functions.

\bibitem{billingsley}
P.~Billingsley.
\newblock {\em Probability and measure}.
\newblock Wiley Series in Probability and Mathematical Statistics. John Wiley \& Sons, Inc., New York, third edition, 1995.
\newblock A Wiley-Interscience Publication.

\bibitem{bjerkevik-lesnick}
H.~B. Bjerkevik and M.~Lesnick.
\newblock $\ell^p$-distances on multiparameter persistence modules.
\newblock {\em arXiv preprint arXiv:2106.13589}, 2021.

\bibitem{botnan-lesnick}
M.~B. Botnan and M.~Lesnick.
\newblock An introduction to multiparameter persistence.
\newblock {\em Proceedings of the 2020 International Conference on Representations of Algebras (to appear). arXiv preprint arXiv:2203.14289}, 2023.

\bibitem{botnan-oppermann-oudot}
M.~B. Botnan, S.~Oppermann, and S.~Oudot.
\newblock Signed barcodes for multi-parameter persistence via rank decompositions.
\newblock In {\em 38th {I}nternational {S}ymposium on {C}omputational {G}eometry}, volume 224 of {\em LIPIcs. Leibniz Int. Proc. Inform.}, pages Art. No. 19, 18. Schloss Dagstuhl. Leibniz-Zent. Inform., Wadern, 2022.

\bibitem{botnan-oppermann-oudot-scoccola}
M.~B. Botnan, S.~Oppermann, S.~Oudot, and L.~Scoccola.
\newblock On the bottleneck stability of rank decompositions of multi-parameter persistence modules.
\newblock {\em arXiv preprint arXiv:2208.00300}, 2022.

\bibitem{bubenik}
P.~Bubenik.
\newblock Statistical topological data analysis using persistence landscapes.
\newblock {\em Journal of Machine Learning Research}, 16:77--102, 2015.

\bibitem{buchet-chazal-oudot-sheehy}
M.~Buchet, F.~Chazal, S.~Y. Oudot, and D.~R. Sheehy.
\newblock Efficient and robust persistent homology for measures.
\newblock {\em Comput. Geom.}, 58:70--96, 2016.

\bibitem{cai-wang}
C.~Cai and Y.~Wang.
\newblock Understanding the power of persistence pairing via permutation test.
\newblock {\em arXiv preprint arXiv:2001.06058}, 2020.

\bibitem{cang-mu-wei}
Z.~Cang, L.~Mu, and G.-W. Wei.
\newblock Representability of algebraic topology for biomolecules in machine learning based scoring and virtual screening.
\newblock {\em PLoS computational biology}, 14(1):e1005929, 2018.

\bibitem{carlsson-memoli}
G.~Carlsson and F.~M\'{e}moli.
\newblock Multiparameter hierarchical clustering methods.
\newblock In {\em Classification as a tool for research}, Stud. Classification Data Anal. Knowledge Organ., pages 63--70. Springer, Berlin, 2010.

\bibitem{carlsson-memoli-2}
G.~Carlsson and F.~M\'{e}moli.
\newblock Classifying clustering schemes.
\newblock {\em Foundations of Computational Mathematics}, 13(2):221--252, 2013.

\bibitem{carlsson-zomorodian}
G.~Carlsson and A.~Zomorodian.
\newblock The theory of multidimensional persistence.
\newblock {\em Discrete \& Computational Geometry}, 42(1):71--93, 2009.

\bibitem{carriere-blumberg}
M.~Carri\`ere and A.~Blumberg.
\newblock Multiparameter persistence image for topological machine learning.
\newblock In H.~Larochelle, M.~Ranzato, R.~Hadsell, M.~Balcan, and H.~Lin, editors, {\em Advances in Neural Information Processing Systems}, volume~33, pages 22432--22444. Curran Associates, Inc., 2020.

\bibitem{carriere-et-al}
M.~Carri{\`e}re, F.~Chazal, Y.~Ike, T.~Lacombe, M.~Royer, and Y.~Umeda.
\newblock Perslay: A neural network layer for persistence diagrams and new graph topological signatures.
\newblock In {\em International Conference on Artificial Intelligence and Statistics}, pages 2786--2796. PMLR, 2020.

\bibitem{carriere-cuturi-oudot}
M.~Carri{\`e}re, M.~Cuturi, and S.~Oudot.
\newblock Sliced {W}asserstein kernel for persistence diagrams.
\newblock In D.~Precup and Y.~W. Teh, editors, {\em Proceedings of the 34th International Conference on Machine Learning}, volume~70 of {\em Proceedings of Machine Learning Research}, pages 664--673. PMLR, 06--11 Aug 2017.

\bibitem{chazal-silva-glisse-oudot}
F.~Chazal, V.~de~Silva, M.~Glisse, and S.~Oudot.
\newblock {\em The structure and stability of persistence modules}.
\newblock SpringerBriefs in Mathematics. Springer, [Cham], 2016.

\bibitem{chazal-fasy-et-al}
F.~Chazal, B.~Fasy, F.~Lecci, Bertr, Michel, Aless, ro~Rinaldo, and L.~Wasserman.
\newblock Robust topological inference: Distance to a measure and kernel distance.
\newblock {\em Journal of Machine Learning Research}, 18(159):1--40, 2018.

\bibitem{chazal-guibas-oudot-skraba}
F.~Chazal, L.~J. Guibas, S.~Y. Oudot, and P.~Skraba.
\newblock Scalar field analysis over point cloud data.
\newblock {\em Discrete Comput. Geom.}, 46(4):743--775, 2011.

\bibitem{chazal-michel}
F.~Chazal and B.~Michel.
\newblock An introduction to topological data analysis: fundamental and practical aspects for data scientists.
\newblock {\em Frontiers in artificial intelligence}, 4:667963, 2021.

\bibitem{chen-guestrin}
T.~Chen and C.~Guestrin.
\newblock {XGBoost}: A scalable tree boosting system.
\newblock In {\em Proceedings of the 22nd ACM SIGKDD International Conference on Knowledge Discovery and Data Mining}, KDD '16, pages 785--794, New York, NY, USA, 2016. Association for Computing Machinery.

\bibitem{cleves-jain}
A.~E. Cleves and A.~N. Jain.
\newblock Robust ligand-based modeling of the biological targets of known drugs.
\newblock {\em Journal of medicinal chemistry}, 49(10):2921--2938, 2006.

\bibitem{cohen-steiner-edelsbrunner-harer}
D.~Cohen-Steiner, H.~Edelsbrunner, and J.~Harer.
\newblock Stability of persistence diagrams.
\newblock In {\em Computational geometry ({SCG}'05)}, pages 263--271. ACM, New York, 2005.

\bibitem{cohen-steiner-edelsbrunner-harer-mileyko}
D.~Cohen-Steiner, H.~Edelsbrunner, J.~Harer, and Y.~Mileyko.
\newblock Lipschitz functions have {$L_p$}-stable persistence.
\newblock {\em Foundations of Computational Mathematics}, 10(2):127--139, 2010.

\bibitem{cohen-steiner-edelsbrunner-morozov}
D.~Cohen-Steiner, H.~Edelsbrunner, and D.~Morozov.
\newblock Vines and vineyards by updating persistence in linear time.
\newblock In {\em Computational geometry ({SCG}'06)}, pages 119--126. ACM, New York, 2006.

\bibitem{corbet-fugacci-kerber-landi-wang}
R.~Corbet, U.~Fugacci, M.~Kerber, C.~Landi, and B.~Wang.
\newblock A kernel for multi-parameter persistent homology.
\newblock {\em Computers \& Graphics: X}, 2:100005, 2019.

\bibitem{crawley2015decomposition}
W.~Crawley-Boevey.
\newblock Decomposition of pointwise finite-dimensional persistence modules.
\newblock {\em Journal of Algebra and its Applications}, 14(05):1550066, 2015.

\bibitem{demir-coskunuzer-gel-segovia-chen-kiziltan}
A.~Demir, B.~Coskunuzer, Y.~Gel, I.~Segovia-Dominguez, Y.~Chen, and B.~Kiziltan.
\newblock To{DD}: Topological compound fingerprinting in computer-aided drug discovery.
\newblock In A.~H. Oh, A.~Agarwal, D.~Belgrave, and K.~Cho, editors, {\em Advances in Neural Information Processing Systems}, 2022.

\bibitem{dey-wang}
T.~K. Dey and Y.~Wang.
\newblock {\em Computational topology for data analysis}.
\newblock Cambridge University Press, Cambridge, 2022.

\bibitem{divol-lacombe}
V.~Divol and T.~Lacombe.
\newblock Understanding the topology and the geometry of the space of persistence diagrams via optimal partial transport.
\newblock {\em J. Appl. Comput. Topol.}, 5(1):1--53, 2021.

\bibitem{durret}
R.~Durrett.
\newblock {\em Probability: theory and examples}, volume~31 of {\em Cambridge Series in Statistical and Probabilistic Mathematics}.
\newblock Cambridge University Press, Cambridge, fourth edition, 2010.

\bibitem{edelsbrunner-letscher-zomorodian}
H.~Edelsbrunner, D.~Letscher, and A.~Zomorodian.
\newblock Topological persistence and simplification.
\newblock volume~28, pages 511--533. 2002.
\newblock Discrete and computational geometry and graph drawing (Columbia, SC, 2001).

\bibitem{ucr}
H.~A.~D. et~al.
\newblock The {UCR Time Series Archive}.
\newblock {\em IEEE/CAA Journal of Automatica Sinica}, 6, 2019.

\bibitem{fomenko}
A.~Fomenko.
\newblock {\em Visual geometry and topology}.
\newblock Springer-Verlag, Berlin, 1994.
\newblock Translated from the Russian by Marianna V. Tsaplina.

\bibitem{ghrist}
R.~Ghrist.
\newblock Barcodes: the persistent topology of data.
\newblock {\em Bull. Amer. Math. Soc. (N.S.)}, 45(1):61--75, 2008.

\bibitem{hacquard-lebovici}
O.~Hacquard and V.~Lebovici.
\newblock Euler characteristic tools for topological data analysis.
\newblock {\em arXiv preprint arXiv:2303.14040}, 2023.

\bibitem{hanin}
L.~G. Hanin.
\newblock Kantorovich-{R}ubinstein norm and its application in the theory of {L}ipschitz spaces.
\newblock {\em Proc. Amer. Math. Soc.}, 115(2):345--352, 1992.

\bibitem{hardy-littlewood-polya}
G.~H. Hardy, J.~E. Littlewood, and G.~P\'{o}lya.
\newblock {\em Inequalities}.
\newblock Cambridge Mathematical Library. Cambridge University Press, Cambridge, 1988.
\newblock Reprint of the 1952 edition.

\bibitem{hatcher}
A.~Hatcher.
\newblock {\em Algebraic Topology}.
\newblock Cambridge University Press, 2001.

\bibitem{kantorovich-rubinstein}
L.~V. Kantorovi\v{c} and G.~v. Rubin\v{s}te\u{\i}n.
\newblock On a functional space and certain extremum problems.
\newblock {\em Dokl. Akad. Nauk SSSR (N.S.)}, 115:1058--1061, 1957.

\bibitem{keller-lesnick-willke}
B.~Keller, M.~Lesnick, and T.~L. Willke.
\newblock Persistent homology for virtual screening.
\newblock 2018.

\bibitem{kerber-rolle}
M.~Kerber and A.~Rolle.
\newblock {\em Fast Minimal Presentations of Bi-graded Persistence Modules}, pages 207--220.

\bibitem{kim-memoli}
W.~Kim and F.~M\'{e}moli.
\newblock Generalized persistence diagrams for persistence modules over posets.
\newblock {\em J. Appl. Comput. Topol.}, 5(4):533--581, 2021.

\bibitem{kusano-fukumizu-hiraoka}
G.~Kusano, K.~Fukumizu, and Y.~Hiraoka.
\newblock Persistence weighted gaussian kernel for topological data analysis.
\newblock In {\em Proceedings of the 33rd International Conference on International Conference on Machine Learning - Volume 48}, ICML'16, pages 2004--2013. JMLR.org, 2016.

\bibitem{lam}
T.~Y. Lam.
\newblock {\em Exercises in modules and rings}.
\newblock Problem Books in Mathematics. Springer, New York, 2007.

\bibitem{le-yamada}
T.~Le and M.~Yamada.
\newblock Persistence fisher kernel: A riemannian manifold kernel for persistence diagrams.
\newblock In {\em Proceedings of the 32nd International Conference on Neural Information Processing Systems}, NIPS'18, pages 10028--10039, Red Hook, NY, USA, 2018. Curran Associates Inc.

\bibitem{lesnick-wright}
M.~Lesnick and M.~Wright.
\newblock Computing minimal presentations and bigraded {Betti} numbers of 2-parameter persistent homology.
\newblock {\em SIAM Journal on Applied Algebra and Geometry}, 6(2):267--298, 2022.

\bibitem{morris-et-al}
C.~Morris, N.~M. Kriege, F.~Bause, K.~Kersting, P.~Mutzel, and M.~Neumann.
\newblock {TUDataset}: A collection of benchmark datasets for learning with graphs.
\newblock In {\em ICML 2020 Workshop on Graph Representation Learning and Beyond (GRL+ 2020)}, 2020.

\bibitem{oudot-scoccola}
S.~Oudot and L.~Scoccola.
\newblock On the stability of multigraded {B}etti numbers and hilbert functions.
\newblock {\em SIAM Journal on Applied Algebra and Geometry}, 8(1):54--88, 2024.

\bibitem{petersen-pedersen}
K.~B. Petersen, M.~S. Pedersen, et~al.
\newblock The matrix cookbook.
\newblock {\em Technical University of Denmark}, 7(15):510, 2008.

\bibitem{peyre-cuturi}
G.~Peyr\'e and M.~Cuturi.
\newblock Computational optimal transport.
\newblock {\em Foundations and Trends in Machine Learning}, 11(5-6):355--607, 2019.

\bibitem{Poulenard2018}
A.~Poulenard, P.~Skraba, and M.~Ovsjanikov.
\newblock {Topological function optimization for continuous shape matching}.
\newblock {\em Computer Graphics Forum}, 37(5):13--25, 2018.

\bibitem{rabin-peyre-delon-bernot}
J.~Rabin, G.~Peyr{\'e}, J.~Delon, and M.~Bernot.
\newblock {Wasserstein barycenter and its application to texture mixing}.
\newblock In {\em International Conference on Scale Space and Variational Methods in Computer Vision}, pages 435--446, 2011.

\bibitem{reininghaus-huber-bauer-kwitt}
J.~Reininghaus, S.~Huber, U.~Bauer, and R.~Kwitt.
\newblock A stable multi-scale kernel for topological machine learning.
\newblock In {\em 2015 IEEE Conference on Computer Vision and Pattern Recognition (CVPR)}, pages 4741--4748, 2015.

\bibitem{rester}
U.~Rester.
\newblock From virtuality to reality-virtual screening in lead discovery and lead optimization: a medicinal chemistry perspective.
\newblock {\em Current opinion in drug discovery \& development}, 11(4):559--568, 2008.

\bibitem{Rizvi2017}
A.~Rizvi, P.~C{\'{a}}mara, E.~Kandror, T.~Roberts, I.~Schieren, T.~Maniatis, and R.~Rabad{\'{a}}n.
\newblock {Single-cell topological RNA-seq analysis reveals insights into cellular differentiation and development}.
\newblock {\em Nature Biotechnology}, 35:551--560, 2017.

\bibitem{Saadatfar2017}
M.~Saadatfar, H.~Takeuchi, V.~Robins, N.~François, and Y.~Hiraoka.
\newblock Pore configuration landscape of granular crystallization.
\newblock {\em Nature Communications}, 8:15082, 2017.

\bibitem{samal-et-al}
A.~Samal, R.~Sreejith, J.~Gu, S.~Liu, E.~Saucan, and J.~Jost.
\newblock Comparative analysis of two discretizations of ricci curvature for complex networks.
\newblock {\em Scientific reports}, 8(1):8650, 2018.

\bibitem{scoccola-rolle}
L.~Scoccola and A.~Rolle.
\newblock Persistable: persistent and stable clustering.
\newblock {\em Journal of Open Source Software}, 8(83):5022, 2023.

\bibitem{sheehy}
D.~R. Sheehy.
\newblock Linear-size approximations to the {V}ietoris-{R}ips filtration.
\newblock {\em Discrete Comput. Geom.}, 49(4):778--796, 2013.

\bibitem{shin-et-al}
W.-H. Shin, X.~Zhu, M.~G. Bures, and D.~Kihara.
\newblock Three-dimensional compound comparison methods and their application in drug discovery.
\newblock {\em Molecules}, 20(7):12841--12862, 2015.

\bibitem{skraba-turner}
P.~Skraba and K.~Turner.
\newblock Wasserstein stability for persistence diagrams.
\newblock {\em arXiv preprint arXiv:2006.16824}, 2022.

\bibitem{sun-ovsjanikov-guibas}
J.~Sun, M.~Ovsjanikov, and L.~Guibas.
\newblock A concise and provably informative multi-scale signature based on heat diffusion.
\newblock In {\em Computer graphics forum}, volume~28, pages 1383--1392. Wiley Online Library, 2009.

\bibitem{gudhi}
{The GUDHI Project}.
\newblock {\em {GUDHI} User and Reference Manual}.
\newblock {GUDHI Editorial Board}, {3.6.0} edition, 2022.

\bibitem{rivet}
{The RIVET Developers}.
\newblock Rivet, 2020.

\bibitem{dieck}
T.~tom Dieck.
\newblock {\em Algebraic topology}.
\newblock EMS Textbooks in Mathematics. European Mathematical Society (EMS), Z\"{u}rich, 2008.

\bibitem{topaz-ziegelmeier-halverson}
C.~M. Topaz, L.~Ziegelmeier, and T.~Halverson.
\newblock Topological data analysis of biological aggregation models.
\newblock {\em PloS one}, 10(5):e0126383, 2015.

\bibitem{umeda}
Y.~Umeda.
\newblock Time series classification via topological data analysis.
\newblock {\em Information and Media Technologies}, 12:228--239, 2017.

\bibitem{verma-zhang}
S.~Verma and Z.-L. Zhang.
\newblock Hunt for the unique, stable, sparse and fast feature learning on graphs.
\newblock {\em Advances in Neural Information Processing Systems}, 30, 2017.

\bibitem{vipond}
O.~Vipond.
\newblock Multiparameter persistence landscapes.
\newblock {\em J. Mach. Learn. Res.}, 21:Paper No. 61, 38, 2020.

\bibitem{vipond-et-al}
O.~Vipond, J.~A. Bull, P.~S. Macklin, U.~Tillmann, C.~W. Pugh, H.~M. Byrne, and H.~A. Harrington.
\newblock Multiparameter persistent homology landscapes identify immune cell spatial patterns in tumors.
\newblock {\em Proceedings of the National Academy of Sciences}, 118(41):e2102166118, 2021.

\bibitem{wojcijowski-et-al}
M.~W{\'o}jcikowski, P.~J. Ballester, and P.~Siedlecki.
\newblock Performance of machine-learning scoring functions in structure-based virtual screening.
\newblock {\em Scientific Reports}, 7(1):1--10, 2017.

\bibitem{xin-mukherjee-samaga-dey}
C.~Xin, S.~Mukherjee, S.~Samaga, and T.~Dey.
\newblock {GRIL: A 2-parameter Persistence Based Vectorization for Machine Learning}.
\newblock In {\em 2nd Annual Workshop on Topology, Algebra, and Geometry in Machine Learning}. OpenReviews.net, 2023.

\bibitem{keyelu-et-al}
K.~Xu, W.~Hu, J.~Leskovec, and S.~Jegelka.
\newblock How powerful are graph neural networks?
\newblock In {\em International Conference on Learning Representations}, 2019.

\bibitem{zhang-et-al}
Z.~Zhang, M.~Wang, Y.~Xiang, Y.~Huang, and A.~Nehorai.
\newblock Retgk: Graph kernels based on return probabilities of random walks.
\newblock {\em Advances in Neural Information Processing Systems}, 31, 2018.

\bibitem{alonso-kerber-pritam}
Ángel Javier~Alonso, M.~Kerber, and S.~Pritam.
\newblock Filtration-domination in bifiltered graphs.
\newblock {\em 2023 Proceedings of the Symposium on Algorithm Engineering and Experiments (ALENEX)}, pages 27--38.

\end{thebibliography}

\newpage

\appendix

\addcontentsline{toc}{section}{Appendix} 
\part{\centering {\Large Appendix to}\\
Stable Vectorization of Multiparameter Persistent Homology using Signed Barcodes as Measures}

\begin{center}
{\bf
David Loiseaux$^{1*}$,
Luis Scoccola$^{2*}$,
Mathieu Carrière$^{1}$,
Magnus B.~Botnan$^{3}$,
Steve Oudot$^{4}$}

\vspace{0.15cm}
$^1$DataShape, Centre Inria d'Université Côte d'Azur
\quad $^2$Mathematics, University of Oxford\\
\quad $^3$Mathematics, Vrije Universiteit Amsterdam
\quad $^4$GeomeriX, Inria Saclay and \'Ecole polytechnique
\end{center}

\parttoc 

\blfootnote{$^*$ Equal contribution.}

\section{Missing definitions and proofs}
\label{appendix:theory}

For an introduction to multiparameter persistence, see, e.g., \cite{botnan-lesnick}.
We mostly follow the conventions from \cite{botnan-oppermann-oudot-scoccola,oudot-scoccola}.

\subsection{Finitely presentable modules}
Let $x \in \Rbb^n$.
We denote by $P_x : \Rbb^n \to \vect$ the persistence module such that $P_x(y) = \kbb$ if $y \geq x$ and $P_x(y) = 0$ otherwise, with all structure morphisms that are not forced to be zero being the identity $\kbb \to \kbb$.

\begin{definition}
  \label{definition:finitely-presentable}
  A persistence module is \emph{finitely generated projective} if it is a finite direct sum of modules of the form $P_x$.
  A persistence module is \emph{finitely presentable} if it is isomorphic to the cokernel of a morphism between two finitely generated projective modules.
\end{definition}

The intuition behind the notion of finitely presentable persistence module is the following.
By definition, any such module is isomorphic to the cokernel of a morphism $\bigoplus_{j \in J} P_{y_j} \to \bigoplus_{i \in I} P_{x_i}$.
The elements $1 \in \kbb = P_{x_i}(x_i)$ are often referred to as the generators of the persistence module---and in TDA they correspond to topological features being born---while the elements $1 \in \kbb = P_{y_j}(y_j)$ are referred to as the relations---and in TDA they correspond to topological features dying or merging.

It is worth mentioning that most construction related to TDA produce finitely presentable modules:

\begin{lemma}
  If $(S,f : S \to \Rbb^n)$ is a finite filtered simplicial complex, then $H_i(f)$ is finitely presentable for all $i \in \Nbb$.
\end{lemma}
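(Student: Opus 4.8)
The plan is to realize $H_i(f)$ as the homology of an explicit chain complex of finitely generated projective persistence modules, and then to reduce to a finite grid so as to invoke Noetherianity. For each dimension $k$, set $C_k(f) \coloneqq \bigoplus_{\dim(\tau) = k} P_{f(\tau)}$, the sum ranging over the $k$-simplices of $S$; since $S$ is finite this is a finite direct sum of modules of the form $P_x$, hence finitely generated projective. The usual simplicial boundary, which sends a generator $\tau$ to the alternating sum of its codimension-one faces $\sigma$, defines morphisms $\partial_k \colon C_k(f) \to C_{k-1}(f)$: this is legitimate because monotonicity of $f$ gives $f(\sigma) \leq f(\tau)$ whenever $\sigma \subseteq \tau$, so the map $P_{f(\tau)} \to P_{f(\sigma)}$ induced by $1 \mapsto 1$ is a well-defined morphism. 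Evaluating at any $x \in \Rbb^n$ recovers the simplicial chain complex of $S^f_x$, and since evaluation is exact it commutes with homology, so $H_i(C_\bullet(f)) \cong H_i(f)$ as persistence modules.

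First I would reduce everything to a finite poset. Writing $A_j = \{\, f(\tau)_j : \tau \in S\,\} \subseteq \Rbb$ for the (finite) set of $j$-th coordinates of the simplex values, let $G = A_1 \times \dots \times A_n$, and define the monotone retraction $r \colon \Rbb^n \to G$ by $r(x)_j = \max\{a \in A_j : a \leq x_j\}$, with the convention that $r(x)$ lies in a formal bottom region on which all modules vanish when some $x_j$ is below all of $A_j$. A short check shows $r^* P^G_x = P_x$ for every $x \in G$, where $P^G_x$ denotes the analogous projective over $G$ and $r^*$ is precomposition with $r$. Since precomposition commutes with finite direct sums, and with the boundary maps (all simplex values lie in $G$), the complex $C_\bullet(f)$ is the image under $r^*$ of the corresponding complex $C^G_\bullet$ of finitely generated projectives over $G$.

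Next I would carry out the homological argument over $G$. The poset $G$ is finite, so its incidence algebra is a finite-dimensional $\kbb$-algebra and therefore Noetherian; consequently the category of finitely generated $G$-persistence modules is abelian, submodules of finitely generated modules are finitely generated, and every finitely generated module admits a presentation by finitely generated projectives. In particular $\im(\partial^G_{i+1})$ is finitely generated as a quotient of $C^G_{i+1}$, and $\ker(\partial^G_i)$ is finitely generated as a submodule of $C^G_i$, so $H_i(C^G_\bullet) = \ker(\partial^G_i)/\im(\partial^G_{i+1})$ is finitely generated, hence finitely presentable over $G$: there is an exact sequence $Q_1 \to Q_0 \to H_i(C^G_\bullet) \to 0$ with $Q_0, Q_1$ finite sums of modules $P^G_x$.

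Finally I would transport this back to $\Rbb^n$. Precomposition with any functor is exact, being computed objectwise, so $r^*$ preserves cokernels and commutes with homology; applying it to the presentation above yields an exact sequence $r^* Q_1 \to r^* Q_0 \to r^* H_i(C^G_\bullet) \to 0$, and $r^* H_i(C^G_\bullet) \cong H_i(r^* C^G_\bullet) = H_i(C_\bullet(f)) \cong H_i(f)$. Since $r^* P^G_x = P_x$, the modules $r^* Q_0, r^* Q_1$ are finitely generated projective over $\Rbb^n$, exhibiting $H_i(f)$ as the cokernel of a morphism between finitely generated projectives, as required. The one genuinely delicate point is the grid reduction: $\Rbb^n$ is not Noetherian as a poset, so finite generation of the submodule $\ker(\partial_i)$ cannot be asserted directly over $\Rbb^n$ and must instead be obtained over the finite poset $G$ and pulled back along the exact functor $r^*$.
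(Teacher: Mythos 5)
Your proof is correct, and it takes a genuinely different route from the paper's for the key homological step. The opening move---realizing $H_i(f)$ as the homology of the chain complex $C_k(f)=\bigoplus_{\dim(\tau)=k}P_{f(\tau)}$ of finitely generated projective persistence modules, with boundary maps legitimized by monotonicity of $f$---is exactly the paper's starting point. Where you diverge is in justifying that the homology of such a complex is finitely presentable. The paper disposes of this by citing two closure properties of finitely presentable modules over $\Rbb^n$: kernels of morphisms between finitely presentable modules are finitely presentable (\cite[Lemma~3.14]{bjerkevik-lesnick}, essentially a coherence statement for this module category, which is the nontrivial ingredient since $\Rbb^n$ is not Noetherian), and cokernels of such morphisms are finitely presentable. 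You instead prove the needed closure by hand: you restrict to the finite grid $G$ spanned by the filtration values, use that modules over a finite poset are modules over a finite-dimensional (hence Noetherian) algebra---so $\ker(\partial^G_i)/\im(\partial^G_{i+1})$ is finitely generated and therefore admits a presentation by finite sums of the $P^G_x$---and then transport the presentation back along the exact precomposition functor $r^*$, using $r^*P^G_x=P_x$. This buys self-containedness (your grid argument is in effect a proof of the coherence lemma in the case at hand) at the cost of the retraction bookkeeping. The one point requiring care, which you do flag, is the bottom convention: one should work over $\prod_j(A_j\cup\{-\infty\})$ so that $r$ is total and monotone; since this poset is still finite, nothing in the Noetherianity argument changes.
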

\begin{proof}
  Let $S_k$ denote the set of $k$-dimensional simplices of $S$.
  By definition of homology, the persistence module $H_i(f)$ is the cokernel of a morphism $\bigoplus_{\tau \in S_{i+1}} P_{f(\tau)} \to K$, where $K$ is the kernel of a morphism $\bigoplus_{\tau \in S_i} P_{f(\tau)} \to \bigoplus_{\tau \in S_{i-1}} P_{f(\tau)}$.
  Since $K$ is the kernel of a morphism between finitely presentable module, it is itself finitely presentable (see, e.g., \cite[Lemma~3.14]{bjerkevik-lesnick}), it follows that $H_i(f)$ is finitely presentable, since it is the cokernel of a morphism between finitely presentable modules (see, e.g., \cite[Exercise~4.8]{lam}).
\end{proof}

\begin{example}
  Let $a < b \in \Rbb$.
  Let $\kbb_{[a,b)} : \Rbb \to \vect$ be the one-parameter persistence module such that $\kbb_{[a,b)}(x) = \kbb$ if $x \in [a,b)$ and $\kbb_{[a,b)}(x) = 0$ if $x \not\in [a,b)$; and, if $a \leq x \leq y < b$, then $\kbb_{[a,b)}(x \leq y)$ is the identity linear map $\kbb \to \kbb$, and is the zero linear map otherwise. 
  The module $\kbb_{[a,b)}$ is often referred to as the \emph{interval module} with support $[a,b)$.
  This interval module is finitely presentable, since it is isomorphic to the cokernel of any non-zero map $P_b \to P_a$.

  An example of a persistence module that is not finitely presentable is any interval module supported over an open interval, such as $\kbb_{(a,b)}$.
\end{example}


\subsection{Discriminative power of descriptors}
It is well-known \cite[Theorem~12]{carlsson-zomorodian} that the one-parameter barcode is equivalent to the rank invariant, which is a descriptor that readily generalizes to the multiparameter case, as follows.

\begin{definition}[\cite{carlsson-zomorodian}]
  \label{definition:rank-invariant}
  The \define{rank invariant} of a persistence module $M : \Rbb^n \to \vect$ is the function which maps each pair of comparable elements $x \leq y \in \Rbb^n$ to the rank of the linear map $M(x) \to M(y)$.
\end{definition}

\begin{proposition}
  \label{lemma:hilbert-weak}
  The Hilbert function is a strictly weaker descriptor than the rank invariant.
  In particular, the descriptors of \cite{carriere-blumberg,vipond,xin-mukherjee-samaga-dey} are more discriminative than the Hilbert decomposition signed measure.
\end{proposition}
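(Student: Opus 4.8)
The plan is to separate the statement into two parts: the comparison of the Hilbert function with the rank invariant (the main assertion), and the ``in particular'' clause concerning the three cited descriptors.

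\textbf{Weak direction.} First I would observe that the Hilbert function is simply a restriction of the rank invariant. Indeed, for any $M : \Rbb^n \to \vect$ and any $x \in \Rbb^n$, the structure map $M(x \leq x)$ is the identity of $M(x)$, so its rank equals $\dim(M(x))$. Thus the value of the rank invariant on the diagonal pair $(x \leq x)$ already recovers $\dim(M)(x)$, and consequently any two modules with equal rank invariant have equal Hilbert function. This shows the Hilbert function is no more discriminative than the rank invariant.

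\textbf{Strictness.} To see that the inclusion is strict, I would exhibit two modules with identical Hilbert functions but different rank invariants. The standard witness is $M = \kbb_{[0,2)}$ and $N = \kbb_{[0,1)} \oplus \kbb_{[1,2)}$, regarded as one-parameter modules and lifted to $n$ parameters by constant extension in the remaining coordinates. Both have Hilbert function equal to the indicator of $[0,2)$, since $\dim N(x) = \dim \kbb_{[0,1)}(x) + \dim \kbb_{[1,2)}(x)$ equals $1$ precisely on $[0,2)$. However, their rank invariants differ at the comparable pair $0 \leq 3/2$: the map $M(0 \leq 3/2)$ is an isomorphism $\kbb \to \kbb$ of rank $1$, whereas $N(0 \leq 3/2)$ factors through the zero space (the summand $\kbb_{[0,1)}$ vanishes at $3/2$ and $\kbb_{[1,2)}$ vanishes at $0$) and hence has rank $0$. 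This establishes that the Hilbert function is \emph{strictly} weaker than the rank invariant.

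\textbf{The ``in particular'' clause.} Here I would first note that the Hilbert decomposition signed measure $\mu_M$ carries exactly the same information as the Hilbert function: by \cref{definition:Hilbert-signed-measure} it is the unique signed measure whose values on the down-sets $\{y \leq x\}$ recover $\dim(M(x))$, and conversely it is determined by those values, so $\mu_M$ and $\dim(M)$ are interdefinable. It then suffices to recall that each cited descriptor is at least as discriminative as the rank invariant: the generalized rank invariant landscape of \cite{xin-mukherjee-samaga-dey} encodes the generalized rank invariant, which specializes to the rank invariant; the multiparameter persistence landscape of \cite{vipond} is a faithful vectorization of the rank invariant; and the multiparameter persistence image of \cite{carriere-blumberg} is built from the fibered barcode, which is interdefinable with the rank invariant. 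Combining these facts with the strict domination established above yields that all three descriptors are strictly more discriminative than $\mu_M$.

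\textbf{Main obstacle.} The internal content is elementary: the weak direction is a one-line observation and the strictness is a standard small counterexample. The only delicate point is the ``in particular'' clause, whose justification rests on correctly importing, from \cite{carriere-blumberg,vipond,xin-mukherjee-samaga-dey}, that each descriptor determines the rank invariant; this is where I would be most careful, in particular in checking that the fibered barcode underlying \cite{carriere-blumberg} recovers the rank invariant at every comparable pair.
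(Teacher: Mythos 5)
Your proposal is correct and follows essentially the same route as the paper: the paper also proves strictness via the counterexample $M = \kbb_{[0,\infty)}$ versus $N = \kbb_{[0,1)} \oplus \kbb_{[1,\infty)}$ (your $\kbb_{[0,2)}$ variant is an inessential modification) and handles the ``in particular'' clause by citing that the descriptors of \cite{carriere-blumberg,vipond,xin-mukherjee-samaga-dey} determine the rank invariant of (at least some) one-parameter slices. Your explicit treatment of the weak direction via the diagonal pairs $(x \leq x)$ is a small addition the paper leaves implicit, but it does not change the argument.
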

\begin{proof}
  The persistence modules $M = \kbb_{[0,\infty)}$ and $N = \kbb_{[0,1)} \oplus \kbb_{[1,\infty)}$ have the same Hilbert function $\Rbb \to \Zbb$, since in both cases the Hilbert function is just the indicator function of the set $[0,\infty)$.
  However, the rank of the linear map $M(0 \leq 2) : M(0) \to M(2)$ is one, while the rank of the linear map $N(0 \leq 2) : N(0) \to N(2)$ is zero, proving the first claim.
  Since the invariants of \cite{carriere-blumberg,vipond,xin-mukherjee-samaga-dey} determine the rank invariant of (at least some) one-parameter slices, it follows that they are strictly more discriminative than the Hilbert function.
\end{proof}

\subsection{References for \cref{proposition:computation-kantorovich} of the main article}

Both statements are well known.
For the first statement, see, e.g., \cite[Proposition~2.1]{peyre-cuturi}, and, for the second one, see, e.g., \cite[Remark~2.30]{peyre-cuturi}.

\subsection{Definition of the Hilbert decomposition signed measure}
\label{section:definition-hilbert-decomposition}
For intuition about the concepts in this section, and their connection to \cite{oudot-scoccola}, see \cref{section:intuition-section}.
Here, to be self-contained, we unfold the definitions of \cite{oudot-scoccola}.

\begin{lemma}[{cf. \cite[Proposition~5.2]{oudot-scoccola}}]
  \label{existence-minimal-hilbert-decomposition}
  Let $M : \Rbb^n \to \vect$ be finitely presentable.
  There exists a pair of finitely generated projective modules $(P,Q)$ such that, as functions $\Rbb^n \to \Zbb$, we have
  \[
    \dim(M) = \dim(P) - \dim(Q).
  \]
\end{lemma}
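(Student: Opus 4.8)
The plan is to construct a finite projective resolution of $M$ and then read off $P$ and $Q$ from its even- and odd-indexed terms, exploiting the fact that the dimension function of a pointwise-exact complex has vanishing Euler characteristic. The point is that a difference of dimension vectors of two fixed projectives cannot account for a single presentation $G_1 \to G_0 \to M \to 0$ (the kernel of $G_1 \to G_0$ contributes an unwanted term), so one is forced to resolve all the way and then alternate signs.

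First I would build a resolution
\[
  0 \to P_m \to \cdots \to P_1 \to P_0 \to M \to 0
\]
with each $P_i$ finitely generated projective. Since $M$ is finitely presentable, by definition there is a morphism $\phi : P_1 \to P_0$ between finitely generated projectives with $M \cong \coker(\phi)$, giving exactness at $P_0$ and at $M$. To extend to the left, set $K_1 = \ker(\phi)$; kernels of morphisms between finitely presentable modules are again finitely presentable (coherence, \cite[Lemma~3.14]{bjerkevik-lesnick}), so $K_1$ admits a surjection $P_2 \to K_1$ from a finitely generated projective $P_2$, and composing with $K_1 \hookrightarrow P_1$ extends the resolution one step while preserving exactness at $P_1$. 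Iterating produces the $P_i$ together with the syzygies $K_i = \ker(P_i \to P_{i-1})$, each again finitely presentable. To guarantee that this process terminates I would invoke finiteness of the projective dimension: every finitely presentable $\Rbb^n$-persistence module has projective dimension at most $n$ (the multiparameter analogue of Hilbert's syzygy theorem, obtained by discretizing $M$ to a $\Zbb^n$-graded $\kbb[x_1,\dots,x_n]$-module supported on a finite grid). Hence some syzygy is projective and the resolution can be taken of finite length $m \leq n$.

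Next I would evaluate pointwise. For each $x \in \Rbb^n$ the evaluation functor $M \mapsto M(x)$ is exact, since kernels and cokernels in the functor category are computed pointwise, so the resolution yields an exact sequence of finite-dimensional vector spaces $0 \to P_m(x) \to \cdots \to P_0(x) \to M(x) \to 0$. The alternating sum of the dimensions of a bounded exact complex vanishes, which gives $\dim(M(x)) = \sum_{i=0}^m (-1)^i \dim(P_i(x))$ for every $x$. Setting $P = \bigoplus_{i \text{ even}} P_i$ and $Q = \bigoplus_{i \text{ odd}} P_i$, both of which are finite direct sums of finitely generated projectives and hence finitely generated projective, I obtain $\dim(M) = \dim(P) - \dim(Q)$ as functions $\Rbb^n \to \Zbb$, as required.

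The main obstacle is the termination of the resolution: it rests on the two substantive inputs of coherence (so that each syzygy is again finitely presentable and can be covered by a finitely generated projective) and finiteness of the projective dimension (so that the syzygies eventually become projective), whereas the Euler-characteristic bookkeeping at the end is routine. If one prefers to avoid invoking the syzygy theorem by name, the same recursion can be run directly on Hilbert functions, writing $\dim(M) = \dim(P_0) - \dim(P_1) + \dim(K_1)$ and recursing on $K_1$; but the finiteness of projective dimension remains exactly what ensures the recursion stops after finitely many steps.
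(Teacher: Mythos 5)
Your proof is correct, and it is essentially the proof of the result that the paper cites rather than proves: \cite[Proposition~5.2]{oudot-scoccola} constructs $(P,Q)$ in exactly this way, from a finite projective resolution (obtained via coherence of the syzygies and terminated by the graded syzygy theorem bounding projective dimension by $n$) together with the pointwise Euler-characteristic identity and the even/odd regrouping. The only step you leave implicit is that the terminal syzygy, being finitely presentable and projective, really is a finite direct sum of modules $P_x$ as \cref{definition:finitely-presentable} requires---the graded analogue of ``finitely generated projective implies free'' over $\kbb[x_1,\dots,x_n]$---which is standard and not a gap.
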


Note that the above decomposition of the Hilbert function is not unique; in fact, there are infinitely many of these decompositions, given by different pairs~$(P,Q)$. Nevertheless, as we will show below, they all yield the same, unique, Hilbert decomposition signed measure. For this
%
%
%
%
we will use the following well known fact from geometric measure theory.

\begin{lemma}
  \label{lemma:signed-measures-agree}
  To prove that $\mu = \nu \in \Mcal(\Rbb^n)$, it is enough to show that, for every $x \in \Rbb^n$, we have
  \[
    \mu\big(\, \{y \in \Rbb^n : y \leq x\}\, \big)
    = 
    \nu\big(\, \{y \in \Rbb^n : y \leq x\}\, \big),
  \]
\end{lemma}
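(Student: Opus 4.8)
The plan is to reduce the statement to the standard uniqueness theorem for measures (the Dynkin $\pi$--$\lambda$ theorem). Set $\lambda \coloneqq \mu - \nu \in \Mcal(\Rbb^n)$; by hypothesis $\lambda$ is a finite signed measure that vanishes on every \emph{lower set} $L_x \coloneqq \{y \in \Rbb^n : y \leq x\}$, and it suffices to show that $\lambda = 0$. Writing the Jordan decomposition $\lambda = \lambda^+ - \lambda^-$, the vanishing hypothesis becomes the equality of the two finite positive measures, $\lambda^+(L_x) = \lambda^-(L_x)$ for all $x \in \Rbb^n$. The whole argument then takes place at the level of these two positive measures.

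First I would observe that the family $\mathcal{L} \coloneqq \{L_x : x \in \Rbb^n\}$ is a $\pi$-system: it is closed under finite intersections because $L_x \cap L_{x'} = L_{x \wedge x'}$, where $x \wedge x'$ denotes the componentwise minimum. Moreover $\mathcal{L}$ generates the Borel $\sigma$-algebra of $\Rbb^n$, since the lower sets $L_x = \prod_{i} (-\infty, x_i]$ generate the product $\sigma$-algebra, which coincides with the Borel $\sigma$-algebra on $\Rbb^n$; this is standard.

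Next I would apply the measure-uniqueness theorem. The two measures $\lambda^+$ and $\lambda^-$ are finite and positive, they agree on the generating $\pi$-system $\mathcal{L}$, and the increasing sequence $A_m \coloneqq L_{(m,\dots,m)} \uparrow \Rbb^n$ consists of elements of $\mathcal{L}$ on which the two measures agree and take finite values. These are exactly the hypotheses of the $\pi$--$\lambda$ uniqueness theorem, so $\lambda^+$ and $\lambda^-$ agree on the whole Borel $\sigma$-algebra; in particular, letting $m \to \infty$ and using continuity from below, the total masses coincide, $\lambda^+(\Rbb^n) = \lambda^-(\Rbb^n)$. Hence $\lambda^+ = \lambda^-$, so $\lambda = 0$ and $\mu = \nu$.

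I expect the only delicate point to be the bookkeeping around finiteness and total mass: the textbook version of the $\pi$--$\lambda$ theorem is often phrased for probability measures, so the main care is in invoking the finite-measure version correctly via the exhausting sequence $A_m$, and in routing the argument through the Jordan decomposition so that the uniqueness result for \emph{positive} measures applies. Everything else---the $\pi$-system property and the generation of the Borel $\sigma$-algebra---is routine.
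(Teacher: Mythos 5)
Your proof is correct and follows essentially the same route as the paper's: both observe that the lower sets $\{y \in \Rbb^n : y \leq x\}$ form a generating $\pi$-system and conclude by the $\pi$-$\lambda$ uniqueness theorem for finite measures. The only difference is cosmetic --- the paper runs the $\lambda$-system argument directly on the signed measures (noting that positivity is not needed for that argument to go through), whereas you first pass to the Jordan decomposition of $\mu - \nu$ so as to quote the positive-measure version verbatim; your bookkeeping with the exhausting sequence and the total masses is exactly the step the paper leaves implicit.
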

\begin{proof}
  Since sets of the form $\{y \in \Rbb^n : y \leq x\}$ generate the Borel sigma-algebra, and the measures in $\Mcal(\Rbb^n)$ are necessarily finite, the result follows from a standard application of the $\pi$-$\lambda$ theorem \cite[Theorem~A.1.4]{durret}.
  For instance, one can follow the proof of \cite[Theorem~A.1.5]{durret}, by noting that positivity of the measures is not required for the proof to work.
\end{proof}

The Hilbert decomposition signed measure is built as a sum of signed Dirac measures, one for each summand of $P$ and $Q$ in the decomposition of~\cref{existence-minimal-hilbert-decomposition}. The following lemma justifies this insight.  
  
\begin{lemma}
  \label{lemma:measure-and-module-indecomposable}
  Let $x \in \Rbb^n$, let $P_x : \Rbb^n \to \vect$ be the corresponding finitely generated projective module, and let $\delta_x$ be the corresponding Dirac measure.
  Then $\dim(P_x)(y) = \delta_{x}(\{w \in \Rbb^n : w \leq y\})$.
\end{lemma}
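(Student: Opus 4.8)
The plan is to unwind both sides of the claimed equality and observe that each is controlled by the single poset condition $x \leq y$. First I would recall the definition of $P_x$ given just above the lemma: $P_x(y) = \kbb$ when $y \geq x$ and $P_x(y) = 0$ otherwise. Taking dimensions, this means $\dim(P_x)(y) = \dim(P_x(y))$ equals $1$ when $x \leq y$ and $0$ when $x \not\leq y$; in other words, $\dim(P_x)$ is the indicator function of the set $\{y \in \Rbb^n : x \leq y\}$.

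Next I would compute the right-hand side directly from the definition of a Dirac measure. Since $\delta_x(A) = 1$ if $x \in A$ and $\delta_x(A) = 0$ otherwise for any Borel set $A$, applying this to $A = \{w \in \Rbb^n : w \leq y\}$ yields $\delta_x(\{w : w \leq y\}) = 1$ exactly when $x \in \{w : w \leq y\}$, that is, when $x \leq y$, and $0$ otherwise. Finally I would note that the two resulting conditions coincide, because $y \geq x$ and $x \leq y$ are the same relation in the poset $\Rbb^n$; hence both sides equal the indicator of $\{y : x \leq y\}$, which finishes the argument.

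I do not expect any genuine obstacle here: the content of the lemma is purely a matching of definitions, with no measure-theoretic subtlety, since the test sets $\{w : w \leq y\}$ are exactly the ones on which $\delta_x$ is being evaluated. Its purpose is structural rather than technical, namely to serve as the single-generator base case for assembling the Hilbert decomposition signed measure as a signed sum of Diracs---one $\delta_{x_i}$ for each summand $P_{x_i}$ of $P$ and one $-\delta_{y_j}$ for each summand $P_{y_j}$ of $Q$ in the decomposition of \cref{existence-minimal-hilbert-decomposition}---with well-definedness then following by additivity together with \cref{lemma:signed-measures-agree}.
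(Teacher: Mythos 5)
Your proposal is correct and follows essentially the same route as the paper's own proof: both sides are unwound to the indicator of the condition $x \leq y$, for the left-hand side via the definition of $P_x$ and for the right-hand side via the definition of the Dirac measure evaluated on the downset $\{w : w \leq y\}$. The remarks on the lemma's role in building $\mu_M$ match the paper's use of it in \cref{theorem:signed-measure-existence-uniqueness}.
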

\begin{proof}
  Note that, by definition of $P_x$, we have $\dim(P_x)(y) = 1$ if $x \leq y$ and $\dim(P_x)(y) = 0$ otherwise.
  Similarly, $\delta_{x}(\{w : w \leq y\}) = 1$ if $x \leq y$ and $\delta_{x}(\{w : w \leq y\}) = 0$ otherwise.
  The result follows.
\end{proof}

We now have the required ingredients to define the Hilbert decomposition signed measure formally. While the statement itself only claims existence and uniqueness, the proof actually builds the measure explicitly as a finite sum of signed Dirac measures (see \cref{equation:definition-of-signed-measure}), as explained above.


\begin{proposition}
  \label{theorem:signed-measure-existence-uniqueness}
  If $M : \Rbb^n \to \vect$ is fp, there exists a unique point measure $\mu_M \in \msp(\Rbb^n)$ with
  \[
    \dim(M(x)) \; = \; \mu_M\big(\, \{y \in \Rbb^n : y \leq x\}\, \big), \; \text{ for all $x \in \Rbb^n$.}
  \]
\end{proposition}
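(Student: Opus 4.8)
The plan is to prove existence by explicit construction and uniqueness by appeal to \cref{lemma:signed-measures-agree}. For existence, I would start from a decomposition of the Hilbert function as in \cref{existence-minimal-hilbert-decomposition}, writing $\dim(M) = \dim(P) - \dim(Q)$ where $P = \bigoplus_{i} P_{x_i}$ and $Q = \bigoplus_{j} P_{y_j}$ are finitely generated projective modules. I would then define the candidate measure as
\begin{equation}
  \label{equation:definition-of-signed-measure}
  \mu_M \coloneqq \sum_{i} \delta_{x_i} - \sum_{j} \delta_{y_j},
\end{equation}
which is manifestly a finite signed point measure in $\msp(\Rbb^n)$.

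The verification that this $\mu_M$ satisfies the defining property is then a direct computation using \cref{lemma:measure-and-module-indecomposable}. For a fixed $x \in \Rbb^n$, evaluating $\mu_M$ on the down-set $\{y : y \leq x\}$ and using additivity of the measure gives $\mu_M(\{y : y \leq x\}) = \sum_i \delta_{x_i}(\{w : w \leq x\}) - \sum_j \delta_{y_j}(\{w : w \leq x\})$, which by \cref{lemma:measure-and-module-indecomposable} equals $\sum_i \dim(P_{x_i})(x) - \sum_j \dim(P_{y_j})(x) = \dim(P)(x) - \dim(Q)(x) = \dim(M)(x) = \dim(M(x))$. This establishes existence.

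For uniqueness, suppose $\mu$ and $\mu'$ are two point measures both satisfying the defining property. Then for every $x \in \Rbb^n$ we have $\mu(\{y : y \leq x\}) = \dim(M(x)) = \mu'(\{y : y \leq x\})$, so the agree-on-down-sets hypothesis of \cref{lemma:signed-measures-agree} is met, forcing $\mu = \mu'$. I expect the main conceptual point—rather than a genuine obstacle—to be the well-definedness of $\mu_M$: since the decomposition $(P,Q)$ is not unique (as noted after \cref{existence-minimal-hilbert-decomposition}), one must argue that different choices give the same measure. This, however, is handled automatically by the uniqueness argument above: any choice of $(P,Q)$ produces a measure satisfying the defining property, and uniqueness then guarantees all such choices coincide, so the construction descends to a well-defined invariant of $M$.
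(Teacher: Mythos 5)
Your proposal is correct and follows essentially the same route as the paper: construct $\mu_M$ as a signed sum of Dirac masses from a Hilbert decomposition $(P,Q)$ via \cref{existence-minimal-hilbert-decomposition}, verify the defining identity using \cref{lemma:measure-and-module-indecomposable}, and deduce uniqueness (hence independence of the choice of $(P,Q)$) from \cref{lemma:signed-measures-agree}. Your closing remark that well-definedness is subsumed by the uniqueness argument is exactly the point the paper makes in the discussion preceding the proposition.
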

\begin{proof}
  Let $(P,Q)$ be as in \cref{existence-minimal-hilbert-decomposition}.
  Let $\{x_i\}_{i \in I}$ be such that $P \cong \bigoplus_{i \in I} P_{x_i}$ and let $\{y_j\}_{j \in J}$ be such that $Q \cong \bigoplus_{j \in J} P_{y_j}$.
  To show existence, define the measure
  \begin{equation}
    \label{equation:definition-of-signed-measure}
    \mu_M = \sum_{i \in I} \delta_{x_i} - \sum_{j \in J} \delta_{y_j}.
  \end{equation}
  Then
  \begin{align*}
    \dim(M(z)) &= \dim(P(z)) - \dim(Q(z))\\
      &= | \{i \in I : x_i \leq z\} | - | \{j \in J : y_j \leq z\} |\\
      &= \mu_M(\{w \in \Rbb^n : w \leq z\}),
  \end{align*}
  where in the second equality we used \cref{lemma:measure-and-module-indecomposable}.
  Uniqueness follows from \cref{lemma:signed-measures-agree}.
\end{proof}

\subsection{The Hilbert decomposition signed measure as a signed barcode}
\label{section:intuition-section}

We clarify the connection between the Hilbert decomposition signed measure and the concept of signed barcode of \cite{oudot-scoccola}.

Let $M : \Rbb^n \to \vect$ be finitely presentable.
A pair of finitely generated projective modules $(P,Q)$ as in \cref{existence-minimal-hilbert-decomposition} is what is called a \emph{Hilbert decomposition} of $\dim(M)$ in \cite{oudot-scoccola}.
Since the modules $P$ and $Q$ are, by assumption, finitely generated projective, there must exist multisets $\{x_i\}_{i \in I}$ and $\{y_j\}_{j \in J}$ of elements of $\Rbb^n$ such that $P \cong \bigoplus_{i \in I} P_{x_i}$ and $Q \cong \bigoplus_{j \in J} P_{y_j}$.
A pair of multisets of elements of $\Rbb^n$ is what is called a \emph{signed barcode} in \cite{oudot-scoccola}.
The intuition is that each element $x$ of $\Rbb^n$ determines its upset $\{y \in \Rbb^n : x \leq y\}$, which coincides with the support of the module $P_x$ (by \cref{lemma:measure-and-module-indecomposable}).
Thus, the supports of the summands of $P$ (resp.~$Q$) are interpreted as the positive (resp.~negative) bars of the signed barcode $(\{x_i\}_{i \in I},\{y_j\}_{j \in J})$.
See \cref{figure:hilbert-decomposition-1,figure:hilbert-decomposition-2} for illustrations.

\begin{figure}
    \includegraphics[width=1.\textwidth]{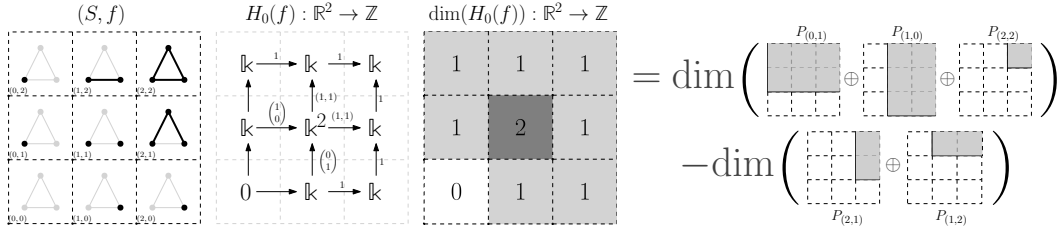}
    \caption{\textit{Left to right:} The filtered simplicial complex of \cref{figure:pipeline} of the main article; its $0$\emph{th} dimensional homology persistence module $H_0(f)$; the Hilbert function of $H_0(f)$; and a decomposition of the Hilbert function of $H_0(f)$ as a linear combination of Hilbert functions of finitely generated projective persistence modules:
    $\dim(H_0(f)) = \dim(P_{(0,1)} \oplus P_{(1,0)} \oplus P_{(2,2)}) - \dim(P_{(2,1)} \oplus P_{(1,2)})$.}
    \label{figure:hilbert-decomposition-1}
\end{figure}

\begin{figure}
    \includegraphics[width=1.\textwidth]{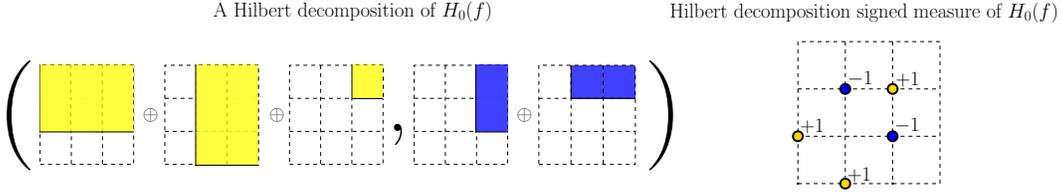}
    \caption{A Hilbert decomposition (in the sense of \cite{oudot-scoccola}) of the module of \cref{figure:hilbert-decomposition-1}, and the corresponding Hilbert decomposition signed measure.
    In the Hilbert decomposition, the supports of the persistence modules in yellow are interpreted as positive bars, while the supports of the persistence modules in blue are interpreted as negative bars.
    Since bars corresponding to finitely generated projective modules are of the form $\{y \in \Rbb^n : y \geq x\}$ for some $x \in \Rbb^n$, they are uniquely characterized by their corresponding $x \in \Rbb^n$; this is why bars in \cite{oudot-scoccola} are just taken to be points of $\Rbb^n$.
    }
    \label{figure:hilbert-decomposition-2}
\end{figure}

\subsection{Proof of \cref{theorem:homology-stability} of the main article}

  Claim (1.) follows by combining the stability of sublevel set homology in the presentation distance \cite[Theorem~1.9~$(i)$]{bjerkevik-lesnick} for $p=1$ with the stability of bigraded Betti numbers with respect to the presentation distance \cite[Theorem~1.5]{oudot-scoccola}, and using the fact that the signed $1$-Wasserstein distance between the signed barcodes of \cite{oudot-scoccola} is equal to the distance induced by the Kantorovich--Rubinstein norm between the signed measures associated to the signed barcodes, which is proven in \cite[Proposition~6.11]{oudot-scoccola}.

  We now prove claim (2.), using definitions from \cite{oudot-scoccola}.
  Consider the following two signed barcodes (in the sense of \cite{oudot-scoccola}):
  \[
    (\Bcal_+,\Bcal_-) \coloneqq \left(
      \{f(\tau)\}_{\substack{\tau \in S \text{ s.t.}\\ \dim(\tau)\\ \text{ is even}}}\;, \;
      \{f(\tau)\}_{\substack{\tau \in S \text{ s.t.}\\ \dim(\tau)\\\text{ is odd}}}\;
    \right)
    , \;
    (\Ccal_+,\Ccal_-) \coloneqq \left(
      \{g(\tau)\}_{\substack{\tau \in S \text{ s.t.}\\ \dim(\tau)\\ \text{ is even}}}\;, \;
      \{g(\tau)\}_{\substack{\tau \in S \text{ s.t.}\\ \dim(\tau)\\\text{ is odd}}}\;
    \right)
  \]
  It is clear that the signed measures associated to these signed barcodes, in the sense of \cite[Section~6.4]{oudot-scoccola}, are equal to $\mu_{\chi(f)}$ and $\mu_{\chi(g)}$, respectively.
  By \cite[Proposition~6.11]{oudot-scoccola} and \cite[Definition~6.1]{oudot-scoccola}, it is then enough to prove that there exists a bijection
  $F : \Bcal_+ \cup \Ccal_- \to \Ccal_+ \cup \Bcal_-$
  such that
  \[
    \sum_{i \in \Bcal_+ \cup\, \Ccal_-} \|i - F(i)\|_1\;
    \leq \;\sum_{\tau \in S} \|f(\tau) - g(\tau)\|_1.
  \]
  To construct such a bijection, we simply map $f(\tau) \in \Bcal_+$ to $g(\tau) \in \Ccal_+$ when $\dim(\tau)$ is even, and $g(\tau) \in \Ccal_-$ to $f(\tau) \in \Bcal_-$ when $\dim(\tau)$ is odd.

  We remark that the content of claim (2.) is essentially the same as that of \cite[Lemma~12]{hacquard-lebovici}, although they use a slightly different terminology.

\subsection{Proof of \cref{lemma:euler-char-measure-with-simplices} of the main article}
  It is well-known \cite[Theorem~12.4.1]{dieck} that, if $S$ is a finite simplicial complex, then
  \[
    \sum_{i \in \Nbb} (-1)^i \dim(H_i(S;\kbb)) = \sum_{\tau \in S} (-1)^{\dim(\tau)}.
  \]
  It follows from this and \cref{theorem:signed-measure-existence-uniqueness} that, if $(S,f)$ is a filtered simplicial complex, with $f : S \to \Rbb^n$, and $x \in \Rbb^n$, then
  \[
    \mu_{\chi(f)} \left(\left\{ y \in \Rbb^n : y \leq x \right\}\right)
    = \sum_{\substack{\tau \in S\\ f(\tau) \leq x}} (-1)^{\dim(\tau)}.
  \]
  Now note that the right-hand side is also equal to the measure of the set $\left\{ y \in \Rbb^n : y \leq x \right\}$ with respect to the signed measure $\sum_{\tau \in S} (-1)^{\dim(\tau)}\; \delta_{f(\tau)}$.
  The result then follows from \cref{lemma:signed-measures-agree}.

\subsection{Lipschitz-stability of vectorizations}

\begin{proof}[Proof of \cref{proposition:stability-convolution} of the main article]
  Let $\lambda = \mu - \nu$ and let $\psi$ be a coupling between $\lambda^+$ and $\lambda^-$.
  Then, for every $x \in \Rbb^n$, we have
  \[
    (K \ast \mu - K \ast \nu)(x) =
    (K \ast \lambda^+ - K \ast \lambda^-)(x) =
    \int_{\Rbb^n} K(x - y) \;d\mu(y) - \int_{\Rbb^n} K(x - z) \; d \nu(z),
  \]
  which is equal to $ \int_{\Rbb^n \times \Rbb^n} (K(x - y) - K(x - z)) \; d \psi(y,z)$,
  since $K(x-y)$ does not depend on $z$ and $K(x-z)$ does not depend on $y$.
  Thus,
  \begin{align*}
    \|K \ast \mu - K \ast \nu\|_2^2 \;
     & = \left(\int_{\Rbb^n} \left(\int_{\Rbb^n \times \Rbb^n}  \left(K(x - y) - K(x - z)\right) \; d \psi(y,z)  \right)^2\;dx \right)^{1/2}\\
     & \leq \int_{\Rbb^n \times \Rbb^n} \left(\int_{\Rbb^n}  \left(K(x - y) - K(x - z)\right)^2 \; dx\right)^{1/2}  \;d \psi(y,z) \\
     & = \int_{\Rbb^n \times \Rbb^n} \|K_y - K_z\|_2  \;d \psi(y,z) \\
     & \leq \int_{\Rbb^n \times \Rbb^n} c\; \|y - z\|_2 \; d \psi(y,z),
  \end{align*}
  where in the first inequality we used Minkowski's integral inequality \cite[Theorem~202]{hardy-littlewood-polya}, and in the second inequality we used the hypothesis about $K$.
  Since the coupling $\psi$ is arbitrary, we have $\|K \ast \mu - K \ast \nu\|_2 \leq c\,\|\lambda\|^\KR_2 = c\,\|\mu - \nu\|^\KR_2$, as required.
\end{proof}


\begin{proposition}
  \label{proposition:Gaussian-is-Lipschitz}
  Let $K : \Rbb^n \to \Rbb$ be a Gaussian kernel with zero mean and covariance $\Sigma$.
  For all $y,z \in \Rbb^n$, we have $\|K_y - K_z\|_2 \leq c\, \|y - z\|_2$,
  with
  \[
    c = \frac{\|\Sigma^{-1}\|_2^{1/2}}{\sqrt{2}\,\pi^{n/4}\, \det(\Sigma)^{1/4}},
  \]
  where $\|\Sigma^{-1}\|_2$ denotes the operator norm of $\Sigma^{-1}$ associated to the Euclidean norm.
\end{proposition}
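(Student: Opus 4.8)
The plan is to reduce the claim to a computation of the $L^2$ autocorrelation of a Gaussian, exploiting translation invariance of the Lebesgue measure. First I would observe that
\[
\|K_y - K_z\|_2^2 = \|K_y\|_2^2 + \|K_z\|_2^2 - 2\langle K_y, K_z\rangle = 2\|K\|_2^2 - 2\phi(y - z),
\]
where $\phi(v) := \int_{\Rbb^n} K(x)K(x - v)\,dx$ is the autocorrelation of $K$, and where I used $\|K_y\|_2 = \|K_z\|_2 = \|K\|_2$ and $\langle K_y, K_z\rangle = \phi(y-z)$, both by translation invariance. This already shows that $\|K_y - K_z\|_2$ depends only on $v = y - z$, and reduces everything to understanding $\phi$.

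The second step is the explicit Gaussian computation. Writing $K(x) = N\exp(-\tfrac12 x^\top \Sigma^{-1}x)$, the product $K(x)K(x-v)$ has a quadratic exponent in $x$, which I would complete the square in, splitting off a factor $\exp(-\tfrac14 v^\top \Sigma^{-1}v)$ times a Gaussian integral in the shifted variable $x - v/2$. That shifted integral is independent of $v$, so this yields $\phi(v) = \phi(0)\exp(-\tfrac14 v^\top\Sigma^{-1}v)$ with $\phi(0) = \|K\|_2^2$. Substituting back gives the exact identity
\[
\|K_y - K_z\|_2^2 = 2\|K\|_2^2\left(1 - \exp\!\big(-\tfrac14 (y-z)^\top\Sigma^{-1}(y-z)\big)\right).
\]

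The third step passes from this exact formula to the Lipschitz bound using two elementary estimates: $1 - e^{-t} \leq t$ for $t \geq 0$, applied with $t = \tfrac14(y-z)^\top\Sigma^{-1}(y-z)$, and the operator-norm bound $(y-z)^\top\Sigma^{-1}(y-z) \leq \|\Sigma^{-1}\|_2\,\|y-z\|_2^2$. Together these give $\|K_y-K_z\|_2^2 \leq \tfrac12\|K\|_2^2\,\|\Sigma^{-1}\|_2\,\|y-z\|_2^2$, so that the asserted inequality holds with $c = (\tfrac12\|K\|_2^2\,\|\Sigma^{-1}\|_2)^{1/2}$; plugging in the value of $\|K\|_2^2$ for the paper's normalization produces the stated constant.

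I expect the main obstacle to be purely the bookkeeping in the Gaussian integral of Step 2 — in particular, keeping track of the determinant and $2\pi$ factors so that $\|K\|_2^2$, and hence the final constant $c$, comes out exactly as stated under the chosen normalization convention. An alternative route that avoids the autocorrelation and instead parallels the proof of \cref{proposition:stability-convolution} is to write $K_y - K_z = \int_0^1 \tfrac{d}{dt}K_{z + t(y-z)}\,dt$, bound $\|K_y - K_z\|_2$ by $\int_0^1\|\tfrac{d}{dt}K_{z+t(y-z)}\|_2\,dt$ via Minkowski's integral inequality, and then compute $\|(y-z)\cdot\nabla K\|_2$ using $\nabla K(x) = -K(x)\Sigma^{-1}x$; this reduces to the second moment of the Gaussian $K^2$ (itself a Gaussian of covariance $\Sigma/2$) and yields the same constant.
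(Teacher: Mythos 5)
Your proposal is correct and follows essentially the same route as the paper's proof: the identity $\|K_y-K_z\|_2^2 = 2\|K\|_2^2 - 2\langle K_y,K_z\rangle$, the explicit Gaussian product/autocorrelation computation giving the factor $\exp\bigl(-\tfrac14(y-z)^\top\Sigma^{-1}(y-z)\bigr)$, and then the bounds $1-e^{-t}\leq t$ and $v^\top\Sigma^{-1}v\leq\|\Sigma^{-1}\|_2\|v\|_2^2$. Your deferred evaluation of $\|K\|_2^2=\pi^{-n/2}\det(\Sigma)^{-1/2}$ does yield exactly the stated constant, so nothing is missing.
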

\begin{proof}
  By definition,
  \[
    K(x) = \frac{ \exp\left(- \frac{1}{2}\; \|x\|^2_{\Sigma^{-1}}\right)}{\sqrt{(2\pi)^{n} \det(\Sigma)}},
  \]
  where $\|-\|_{\Sigma^{-1}}$ denotes the norm associated to the quadratic form $\Sigma^{-1}$.

  We start by noting that $\|K_y - K_z\|_2^2 = 2\|K\|_2^2 - 2\langle K_y, K_z \rangle$.
  Now, a standard computation shows that
  \[
    \|K\|_2^2 = \int_{\Rbb^n} \frac{ \exp\left(- \|x\|^2_{\Sigma^{-1}}\right)}{(2\pi)^{n} \det(\Sigma)} \; dx = \frac{1}{\pi^{n/2} \, \det(\Sigma)^{1/2}}.
  \]
  The term $\langle K_y, K_z \rangle$ can be computed similarly, using the formula for the product of Gaussian densities \cite[Section~8.1.8]{petersen-pedersen}:
  \begin{align*}
    \langle K_y, K_z \rangle &= \int_{\Rbb^n} K_y(x)\,K_z(x) \; dx\\
      &= \int_{\Rbb^n}
        \frac{\exp\left(
                  - \|x - \frac{y+z}{2}\|^2_{\Sigma^{-1}}
                  - \|\frac{y - z}{2}\|^2_{\Sigma^{-1}}
                  \right)}
             {(2\pi)^n \, \det(\Sigma)} \; dx\\
      & = \exp\left( - \left\| \frac{y - z}{2} \right\|^2_{\Sigma^{-1}} \right)
            \int_{\Rbb^n} \frac{
                    \exp( - \left\| x - \frac{y + z}{2}\right\|^2_{\Sigma^{-1}})
                  }
                  {(2\pi)^n \, \det(\Sigma)} \; dx\\
      &= \frac{\exp\left( - \left\| \frac{y - z}{2} \right\|^2_{\Sigma^{-1}} \right)}{\pi^{n/2} \, \det(\Sigma)^{1/2}}.
  \end{align*}
  Thus,
  \[
    \|K_y - K_z\|_2^2 = 2\|K\|_2^2 - 2\langle K_y, K_z \rangle
    = 2 \, \frac{1 - \exp\left( - \left\| \frac{y - z}{2}\right\|^2_{\Sigma^{-1}}\right) } {\pi^{n/2} \, \det(\Sigma)^{1/2}} \leq 2 \,\frac{\left\| \frac{y - z}{2}\right\|^2_{\Sigma^{-1}} } {\pi^{n/2} \, \det(\Sigma)^{1/2}},
  \]
  where, for the inequality, we use the fact that the function $z\mapsto 1 - \exp(-z)$ is $1$-Lipschitz when restricted to $\Rbb_{\geq 0}$.
  The result now follows from the fact that 
  $\|y - z\|^2_{\Sigma^{-1}} \leq \|\Sigma^{-1}\|_2\, \|y - z\|^2_2$, by a standard property of the operator norm.
\end{proof}


\begin{proposition}
  \label{proposition:lipschitz-compact-support-lipschitz}
  Let $K : \Rbb^n \to \Rbb$ be $a$-Lipschitz and of compact support.
  For all $y,z \in \Rbb^n$, we have $\|K_y - K_z\|_2 \leq c\, \|y - z\|_2$,
  with $c = a \cdot \left(2\,|\supp(K)|\right)^{1/2}$, where $|\supp(K)|$ denotes the Lebesgue measure of the support of $K$.
\end{proposition}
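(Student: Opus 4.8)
The plan is to bound $\|K_y - K_z\|_2^2$ directly by combining a pointwise Lipschitz estimate with a support argument that confines the integral to a region of finite Lebesgue measure. Recalling that $K_y(x) = K(x-y)$, I would first write the squared $L^2$-norm of the difference as
\[
  \|K_y - K_z\|_2^2 = \int_{\Rbb^n} \bigl(K(x-y) - K(x-z)\bigr)^2 \, dx.
\]
For the integrand, the $a$-Lipschitz hypothesis gives a uniform pointwise bound: since $(x-y) - (x-z) = z - y$, we have $|K(x-y) - K(x-z)| \leq a\,\|y-z\|_2$ for every $x \in \Rbb^n$, so the integrand is everywhere at most $a^2\,\|y-z\|_2^2$.

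The crucial observation, and the step that makes this uniform bound usable, is that the integrand vanishes whenever \emph{both} $x-y$ and $x-z$ lie outside $\supp(K)$. Hence the integration may be restricted to the set $(\supp(K)+y)\cup(\supp(K)+z)$, and by translation invariance of the Lebesgue measure this set has measure at most $2\,|\supp(K)|$. Combining the pointwise bound with this measure estimate yields
\[
  \|K_y - K_z\|_2^2 \leq a^2\,\|y-z\|_2^2 \cdot 2\,|\supp(K)|,
\]
and taking square roots gives exactly the claimed inequality with $c = a\cdot(2\,|\supp(K)|)^{1/2}$.

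I do not anticipate a serious obstacle here, as the argument is an elementary two-line estimate once the setup is in place. The only point requiring mild care is correctly identifying the support of the difference $K_y - K_z$ and invoking translation invariance, so that the constant integrand $a^2\,\|y-z\|_2^2$ is integrated over a region of finite measure rather than over all of $\Rbb^n$ (where it would yield an infinite bound). Together with \cref{proposition:Gaussian-is-Lipschitz}, this confirms that the two natural families of kernels used in practice both satisfy the hypotheses of \cref{proposition:stability-convolution}.
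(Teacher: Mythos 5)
Your proof is correct and follows essentially the same route as the paper's: a uniform pointwise bound $|K(x-y)-K(x-z)|\leq a\,\|y-z\|_2$ from the Lipschitz hypothesis, combined with restricting the integral to $\supp(K_y)\cup\supp(K_z)$ (which is exactly your $(\supp(K)+y)\cup(\supp(K)+z)$) of measure at most $2\,|\supp(K)|$. No discrepancies to report.
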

\begin{proof}
  Note that $|K_y(x) - K_z(x)| = |K(x-y) - K(x-z)| \leq a\, \|(x-y) - (x-z)\|_2 = a\, \|y - z\|_2$, by assumption.
  Now
  \begin{align*}
    \|K_y - K_z\|_2^2
    &= \int_{\Rbb^n} (K_y(x) - K_z(x))^2\; dx\\
    &= \int_{\supp(K_y)\, \cup\, \supp(K_z)} (K_y(x) - K_z(x))^2\; dx\\
    &\leq \int_{\supp(K_y)\, \cup\, \supp(K_z)} a^2\, \|y-z\|_2^2\; dx\\
    &= a^2\, \|y-z\|^2_2 \, \int_{\supp(K_y)\, \cup\, \supp(K_z)} dx\\
    &\leq a^2 \cdot 2\,|\supp(K)| \cdot \|y-z\|^2_2
  \end{align*}
  as required.
\end{proof}

\begin{lemma}
  \label{lemma:conditionally-semi-negative}
  Let $\alpha$ be a measure on the $(n-1)$-dimensional sphere $S^{n-1}$.
  The function $\dslicedwass^\alpha$ is conditionally negative semi-definite on the set $\msp_0(\Rbb^n)$.
\end{lemma}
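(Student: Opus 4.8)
The plan is to exhibit $\dslicedwass^\alpha$ as the pullback, along a linear map, of an $L^1$ distance, and then to invoke two standard closure properties of conditionally negative semi-definite (CNSD) functions: that any squared Hilbert-space distance is CNSD, and that the class of CNSD functions on a fixed set is closed both under pullback along arbitrary maps and under integration against a nonnegative measure. The conceptual content is small; almost everything is an assembly of these closure properties.

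First I would rewrite the summands defining $\dslicedwass^\alpha$. For a signed measure $\lambda \in \msp_0(\Rbb)$, writing $F_\lambda(t) = \lambda\big((-\infty,t]\big)$, the one-dimensional Kantorovich--Rubinstein norm admits the representation $\|\lambda\|^\KR = \int_\Rbb |F_\lambda(t)|\,dt$ (the primal optimal transport cost on the line equals the $L^1$ distance between cumulative functions; this uses only that $\lambda$ has total mass zero and finite first moment, which holds for the point measures at play). Pushing forward along $\pi^\theta$ and using that $\pi^\theta_*$ is linear, the integrand of $\dslicedwass^\alpha$ becomes $\int_\Rbb |F_{\pi^\theta_*\mu}(t) - F_{\pi^\theta_*\nu}(t)|\,dt$, so that
\[
  \dslicedwass^\alpha(\mu,\nu) = \int_{S^{n-1}}\!\int_\Rbb \big| G_{\theta,t}(\mu) - G_{\theta,t}(\nu)\big|\; dt\, d\alpha(\theta), \quad\text{where } G_{\theta,t}(\mu) = \mu\big(\{x : \langle \theta, x\rangle \leq t\}\big).
\]
The key observation is that, for each fixed $(\theta,t)$, the map $G_{\theta,t} : \msp_0(\Rbb^n) \to \Rbb$ is \emph{linear}.

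Next I would establish the atomic fact that the absolute-value metric $|s-s'|$ on $\Rbb$ is CNSD. This holds because it is a squared Hilbert distance: the map sending $s$ to the signed indicator $\mathbf{1}_{(0,s]} \in L^2(\Rbb)$ (with the convention $\mathbf{1}_{(0,s]} = -\mathbf{1}_{(s,0]}$ for $s<0$) satisfies $\|\mathbf{1}_{(0,s]} - \mathbf{1}_{(0,s']}\|_2^2 = |s - s'|$, and for any Hilbert-space-valued map $\phi$ one has $\sum_{i,j} c_i c_j \|\phi(x_i)-\phi(x_j)\|^2 = -2\|\sum_i c_i \phi(x_i)\|^2 \leq 0$ whenever $\sum_i c_i = 0$. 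Composing with the linear map $G_{\theta,t}$ then shows that $(\mu,\nu) \mapsto |G_{\theta,t}(\mu) - G_{\theta,t}(\nu)|$ is CNSD on $\msp_0(\Rbb^n)$ for every $(\theta,t)$, since a pullback of a CNSD function along any map is again CNSD. Finally, integrating over $(\theta,t)$ against the nonnegative product measure $\alpha \otimes \mathrm{Leb}$ preserves the defining inequality: for fixed $\mu_1,\dots,\mu_m$ and $c_1,\dots,c_m$ with $\sum_i c_i = 0$, the integrand $\sum_{i,j} c_i c_j |G_{\theta,t}(\mu_i) - G_{\theta,t}(\mu_j)|$ is pointwise $\leq 0$, hence so is its integral, giving that $\dslicedwass^\alpha$ is CNSD.

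The main obstacle is the measure-theoretic bookkeeping rather than any conceptual difficulty. I must justify the cumulative-function formula for \emph{signed} (not merely probability) measures of total mass zero; verify that $(\theta,t) \mapsto \sum_{i,j} c_i c_j |G_{\theta,t}(\mu_i) - G_{\theta,t}(\mu_j)|$ is measurable and integrable so that the interchange between the finite sum and the double integral is legitimate (finiteness follows since $\theta \mapsto \|\pi^\theta_*\mu\|^\KR$ is bounded on $S^{n-1}$ and $\alpha$ is finite); and confirm that the half-spaces $\{x : \langle\theta,x\rangle \leq t\}$ are Borel so that $G_{\theta,t}$ is well-defined. Once these routine points are settled, the three reductions---linearity of $G_{\theta,t}$, the squared-Hilbert representation of $|s-s'|$, and closure of CNSD functions under nonnegative integration---assemble immediately into the statement.
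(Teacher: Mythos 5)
Your proof is correct, and it takes a genuinely different (though related) route from the paper's. The paper first reduces the signed case to the positive case: for fixed $\theta$ it replaces each pushforward $\nu_i = \pi^\theta_*\mu_i$ by the positive measure $\lambda_i = \nu_i^+ + \sum_{\ell\neq i}\nu_\ell^-$, observes that all the $\lambda_i$ have the same total mass and that $\|\nu_i-\nu_j\|^\KR = \|\lambda_i-\lambda_j\|^\KR$, then cites the known fact that the $1$-Wasserstein distance on positive measures of a fixed total mass on the line is conditionally negative semi-definite (a fact itself proven via the $L^1$ isometry with cumulative functions), and finally integrates over $\theta$. You instead work with signed measures throughout: you inline the cumulative-function representation $\|\lambda\|^\KR = \int_\Rbb |F_\lambda(t)|\,dt$ directly for mass-zero signed measures on the line, exhibit $\dslicedwass^\alpha$ as an integral of pullbacks of the absolute-value metric along the linear functionals $G_{\theta,t}$, and prove that $|s-s'|$ is conditionally negative semi-definite from scratch via the squared-Hilbert-distance identity. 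Both arguments ultimately rest on the same fact---the one-dimensional transport cost is an $L^1$ distance of cumulative functions, hence a squared Hilbert-space distance---but the paper's symmetrization trick buys a direct citation of an existing result for positive measures, while your version is more self-contained, avoids taking Jordan decompositions of the individual $\nu_i$, and makes transparent exactly where the mass-zero hypothesis and the finiteness of $\alpha$ enter. The measure-theoretic points you flag (validity of the cumulative-function formula for signed measures, measurability and integrability in $(\theta,t)$) are indeed routine for the finite signed point measures arising here, since each $F_{\pi^\theta_*\mu}$ is a compactly supported step function; so there is no gap.
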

\begin{proof}
  Let $a_1, \dots, a_k \in \Rbb$ such that $\sum_i a_i = 0$ and let $\mu_1, \dots, \mu_k \in \msp_0(\Rbb^n)$.
  Let $\theta \in S^{n-1}$, let $\nu_i = \pi^\theta_* \mu_i$, and let $\lambda_i = \nu_i^+ + \sum_{\ell \neq i} \nu_\ell^-$.
  Note that $\lambda_i$ is a positive measure on $\Rbb$ for all $i$, and that $\lambda_i(\Rbb) = \nu_i^+(\Rbb) + \sum_{\ell \neq i} \nu_\ell^-(\Rbb) = \nu_i^+(\Rbb) + \sum_{\ell \neq i} \nu_\ell^+(\Rbb) = \sum_\ell \nu_\ell^+(\Rbb) = m$ is independent of $i$.
  Then
  \begin{align*}
    \|\nu_i - \nu_j\|^\KR & = \left\| \;\nu_i^+ + \nu_j^- + \sum_{i \neq  \ell \neq j} \nu_\ell^-
    - \left(\; \nu_j^+ + \nu_i^- + \sum_{i \neq \ell \neq j} \nu_\ell^-\;\right)\right\|^\KR      \\
                           & = \| \lambda_i  - \lambda_j \|^\KR.
  \end{align*}
  It follows that $\sum_{i,j} a_i a_j\; \|\nu_i - \nu_j\|^\KR = \sum_{i,j} a_i a_j\; \|\lambda_i - \lambda_j\|^\KR \leq 0$ since the Wasserstein distance on positive measures of a fixed total mass $m$ on the real line is conditionally negative semi-definite \cite[Proposition~2.1 ($i$)]{carriere-cuturi-oudot}, as it is isometric to an $L^1$-distance \cite[Remark~2.30]{peyre-cuturi}.
  The result then follows by integrating over $\theta \in S^{n-1}$.
\end{proof}

\begin{proof}[Proof of \cref{proposition:stability-sliced-wasserstein}]
  We start with the stability of the sliced Wasserstein distance.
  By linearity of integration, it is enough to prove that $\left\|\,\pi^\theta_*\mu\, - \, \pi^\theta_*\nu\,\right\|^\KR \leq \|\mu - \nu\|^\KR_2$ for every $\theta \in S^{n-1}$, which follows directly from the fact that orthogonal projection onto $L(\theta)$ is a $1$-Lipschitz map when using Euclidean distances.

  To prove the existence of a Hilbert space $\Hcal$
  and a map $\fslicedwass^{\alpha} : \msp_0(\Rbb^n) \to \Hcal$ such that, for all $\mu,\nu \in \msp_0(\Rbb^n)$, we have $\kslicedwass^\alpha(\mu,\nu) = \langle \fslicedwass^\alpha(\mu), \fslicedwass^\alpha(\nu) \rangle_\Hcal$, we apply \cite[Theorem~3.2.2,~p.74]{berg-christensen-ressel}; this requires us to prove that $\dslicedwass^\alpha$ is a conditionally negative semi-definite distance, which we do in \cref{lemma:conditionally-semi-negative}.
  To conclude, we must show that $\| \fslicedwass^\alpha(\mu) - \fslicedwass^\alpha(\nu) \|_\Hcal \leq 2 \cdot \dslicedwass^\alpha(\mu,\nu)$.
  This follows from the fact that
  $\| \fslicedwass^\alpha(\mu) - \fslicedwass^\alpha(\nu) \|_\Hcal = 2 - 2\cdot \kslicedwass^\alpha(\mu,\nu)$ and that the function $z\mapsto 1 - \exp(-z)$ is $1$-Lipschitz when restricted to $\Rbb_{\geq 0}$.
\end{proof}

\section{Review of numerical descriptors of persistence modules}
\label{section:related-work}

\noindent\textbf{Hilbert function.}
In the one-parameter case, the Hilbert function is known as the Betti curve; see, e.g., \cite{umeda}, and \cite{topaz-ziegelmeier-halverson} for an extension beyond the one-parameter case.
The Hilbert function is itself a numerical descriptor, so it can be used as a feature to train vector-based machine learning models; it has been used in the multiparameter case in \cite{demir-coskunuzer-gel-segovia-chen-kiziltan}, where it performs favorably when compared to application-specific state-of-the-art methods for drug discovery tasks.

\noindent\textbf{Euler characteristic.}
The (pointwise) Euler characteristic of a (multi)filtered simplicial complex is also readily a numerical invariant.
It has been used to train machine leaning models in the multiparameter case in \cite{beltramo,hacquard-lebovici}.

\noindent\textbf{Barcode-based.}
Many numerical invariants of persistence modules based on the one-parameter barcode have been proposed (see, e.g., \cite{dashti-asaad-jimenez-nanda-paluzo-soriano} for a survey).
Since these methods rely on the one-parameter barcode,
they do not immediately generalize to the multiparameter case.

\noindent\textbf{Barcode endpoints and filtration values.}
It has been shown that the full discriminating power of barcodes is not needed to achieve good performance in topology-based classifications tasks \cite{cai-wang,dashti-asaad-jimenez-nanda-paluzo-soriano}.
Indeed, \cite{cai-wang} argues that, in many cases, features which only use the endpoints of barcodes, and thus forget the pairing between endpoints, perform as well as features that do use the pairings.
The work \cite{dashti-asaad-jimenez-nanda-paluzo-soriano} reaches somewhat similar conclusions, although their descriptors do keep some of the information encoded by the pairing between endpoints given by the barcode (in particular making their descriptors not immediately generalizable to the multiparameter case).
The analysis of \cite{cai-wang} is particularly relevant to our work:
our Hilbert decomposition signed measure can be interpreted as a signed version of their \texttt{pervec} descriptor, and our Euler characteristic signed measure can be interpreted as a signed version of their \texttt{filvec} descriptor.

\noindent\textbf{Rank invariant.}
The rank invariant (\cref{definition:rank-invariant}) can be encoded as a function $\Rbb^n \times \Rbb^n \to \Zbb$, by declaring the function to be zero on pairs $x \not\leq y$.
However, to our knowledge, the rank invariant has not been used directly as a numerical descriptor to train vector-based machine learning models.
Vectorizations of the rank invariant, and of some of its generalized versions \cite{kim-memoli,botnan-oppermann-oudot,asashiba-escolar-nakashima-yoshiwaki}, have been introduced building on the notion of persistence landscape.

\noindent\textbf{Persistence landscape.}
The persistence landscape \cite{bubenik} is a numerical descriptor of one-parameter persistence modules which completely encodes the rank invariant of the module.
The persistence landscape was extended to a descriptor of multiparameter persistence modules in \cite{vipond} by stacking persistence landscapes associated to the restriction of the multiparameter persistence modules to all lines of slope $1$.
Another extension of the persistence landscape, this time to the $2$-parameter case, is the generalized rank invariant landscape \cite{xin-mukherjee-samaga-dey}, which relies on the generalized rank invariant restricted to certain convex shapes in $\Rbb^2$.

\noindent\textbf{Multiparameter persistence kernel.}
Any kernel method for one-parameter persistence modules, such as 
\cite{reininghaus-huber-bauer-kwitt,kusano-fukumizu-hiraoka,carriere-cuturi-oudot,le-yamada}, gives rise to a kernel method for multiparameter persistence modules, using the methodology of \cite{corbet-fugacci-kerber-landi-wang}, which relies on ``slicing'', that is, on restricting multiparameter persistence modules to monotonic lines in their parameter space.

\noindent\textbf{Multiparameter persistence image.}
Another vectorization method which relies on slicing multiparameter persistence modules is in \cite{carriere-blumberg}.
Their method uses the notion of vineyard \cite{cohen-steiner-edelsbrunner-morozov} to relate the barcodes of different slices and outputs a descriptor which encodes these relationships.

\section{Further details about experiments}
\label{section:details-experiments}

\begin{table}
  \begin{center}
    \scriptsize
    \csvautotabularcenter{csv/graph_data_10.csv}
  \end{center}
  \caption{Accuracy scores (averaged over 10-fold train/test splits) on graph datasets.
  Bold indicates best accuracy and underline indicates best accuracy among topological methods.
  For our methods and GIN, we report \emph{standard deviation}, while RetGK reports \emph{standard error}.}
  \label{table:graph-data-10}

  \begin{center}
    \scriptsize
    \csvautotabularcenter{csv/computation_time_UCR.csv}
  \end{center}
  \caption{Time (in seconds) to go from point clouds to the Hilbert decomposition signed measure (column HSM), as well as time (in seconds) to go from the signed measure to the output of our proposed vectorizations (columns HSM-MP-SW and HSM-MP-C).}
  \label{table:computation-time-ucr}

  \begin{center}
    \scriptsize
    \csvautotabularcenter{csv/computation_time_UCR_comparison.csv}
  \end{center}
  \caption{Time (in seconds) taken by different vectorization methods for multifiltered simplicial complexes.}
  \label{table:computation-time-ucr-comparison}

  \begin{center}
    \scriptsize
    \csvautotabularcenter{csv/computation_time_graphs.csv}
  \end{center}
  \caption{Time (in seconds) to go from graphs to the Hilbert decomposition signed measure (column HSM), as well as time (in seconds) to go from the signed measure to the output of our proposed vectorizations (columns HSM-MP-SW and HSM-MP-C).}
  \label{table:computation-time-graphs}

\end{table}


\subsection{Hyperparameter choices}
\label{section:hyperparameters}
We fix $\beta = 0.01$ in all experiments.
We also use $d=50$ slicing lines and a grid size of $k=1000$ for MP-SW.
In supervised learning tasks, all parameters are chosen using 10-fold cross validation (cv).
Beside the kernel SVM regularization parameter, these include:
the size of the grid $k$ is chosen to be in $\{20, 50, 100\}$ for MP-SM;
we use homology in dimensions $0$ and $1$ (except for the Cleves--Jain data, for which we use only dimension $0$) with coefficients in the field $\kbb = \Zbb/11\Zbb$, which we just concatenate for MP-C, or combine linearly with coefficients chosen in $\{1,5,10\}$ for MP-SW.
In general, the $n$ parameters of a multiparameter persistence module $M : \Rbb^n \to \vect$ represent quantities expressed in incomparable units, and are thus a priori incomparable; in order to account for this, we rescale each direction of the persistence module as follows: for MP-SW we choose a constant $c$ in $\{0.5 ,1,1.5\}$ for each direction, and scale by $c/\sigma$, with $\sigma \in \{0.001, 0.01, 1, 10, 100, 1000\}$;
for MP-SM we choose a constant $c$ in $\{0.01, 0.125, 0.25,0.375, 0.5\}$ for each direction.

\subsection{One-parameter experiments}
\label{section:one-parameter-experiments}
We use an alpha filtration \cite[Section~2.3.1]{dey-wang} for the UCR data and, for the graph data, we filter graphs by the node degrees.
For UCR data we use $0$ and $1$ dimensional homology; we take the sum the kernels evaluated on $0$ and $1$ dimensional features for the kernel methods and otherwise concatenate the features for the other vectorizations.
For graph data we use extended persistence as in, e.g., \cite{carriere-et-al}.

The parameters of all vectorization methods are chosen by 10-fold cross validation.
As classifier, we use a support vector machine with parameter $\gamma \in \{0.01, 0.1, 1, 10, 100\}$ and an RBF kernel, except for the kernel methods for which we use a precomputed kernel.
The regularization parameter is chosen in $\{0.01,1, 10, 100, 1000\}$.
For sliced Wasserstein kernel (SWK) \cite{carriere-et-al} we use $\sigma \in \{0.001, 0.01, 1, 10, 100, 1000\}$; for persistence images (PI) \cite{adams-et-al} we use the kernel bandwidth in $\{0.01, 0.1, 1, 10, 100\}$ and a resolution in $\{20, 30\}$; for persistence landscape (PL) \cite{bubenik} we let the number of landscapes to be in $\{3,4,5,6,7,8\}$, and the resolution in $\{50,100,200,300\}$;
and for pervec (PV) we use a histogram with number of bins in $\{100,200,300\}$.

The reported accuracy is averaged over 10 train/test splits in the case of graphs; for UCR we use the given train/test split.

\subsection{Further graph experiments}
\label{section:further-graph-experiments}

We compare our methods to the multiparameter persistence methods based on the Euler characteristic ECP, RT, and HTn of \cite{hacquard-lebovici}.
We also compare against the state-of-the-art graph classification methods RetGK \cite{zhang-et-al}, FGSD \cite{verma-zhang}, and GIN \cite{keyelu-et-al}.
We choose these because they are the ones that performed the best in the analysis of \cite{hacquard-lebovici}; in particular, we use the accuracies reported there.
Parameters are as in \cref{section:hyperparameters}, except that we compute accuracy with 10-fold train/test split.
The methods RetGK, FGSD, GIN and those from \cite{hacquard-lebovici} also use 10-fold train/test splits for accuracy. 
Note that, for simplicity, and as opposed to \cite{hacquard-lebovici}, we do not cross-validate different choices of filtration, and instead we use the following three filtrations:
the degree of the nodes normalized by the number of nodes in the graph,
the closeness centrality,
and the heat kernel signature \cite{sun-ovsjanikov-guibas} with time parameter~$10$.
We believe it is possible that better scores can be obtained by considering various other combinations of filtrations.
We also remark that choosing three or more filtrations is usually not possible with other persistence based methods (\cite{hacquard-lebovici} being a notable exception).

All scores can be found in \cref{table:graph-data-10}. One can see that our methods are competitive with the other topological baselines, which shows the benefits of using signed measure and barcode
decompositions over raw Euler functions. It is also worth noting that topological methods achieve scores that are comparable with state-of-the-art, non-topological baselines.

\subsection{Pointcloud classification filtering Rips by density}
\label{section:ucr-rips}
In order to check that the performance of the sliced Wasserstein kernel in \cref{section:ucr-main} can be improved by using a more robust filtration, as explained there, we use here a function-Rips filtration (\cref{example:function-rips}) with a Gaussian kernel density estimate with bandwidth in
$\{0.001 \cdot r, 0.01 \cdot r, 0.1 \cdot r, 0.2 \cdot r, 0.3 \cdot r\}$, where $r$ is the radius of the dataset, chosen by cv.
As one can see from the results in \cref{table:ucr-rips}, MP-HSM-SW is indeed quite effective with this choice.

\begin{table}[H]
  \begin{center}
    \scriptsize
    \csvautotabularcenter{csv/time_series_data_rips.csv}
  \end{center}
  \caption{Accuracy scores of baselines and multiparameter persistence methods on time series datasets.
  Boldface indicates best accuracy and underline indicates best accuracy among topological methods.}
  \label{table:ucr-rips}
\end{table}

\subsection{The enrichment factor}
\label{section:enrichment-factor}

A common approach for quantifying the performance of virtual screening methods \cite{shin-et-al} uses the \emph{enrichment factor} $EF$, defined as follows.
Given a query $q$ and a test of ligands $L$, with each ligand labeled as either \emph{active} or a \emph{decoy} with respect to the query, the virtual screening method is run on $(q,L)$ producing a linear order $O(L)$.
Given $\alpha \in (0,100)$,
\[
  EF_{\alpha}(q,L) \coloneqq
  \frac{
    \big|\text{active molecules in first $(\alpha/100)\times |L|$ elements of $O(L)$}\big|
      \; / \;
    \big((\alpha/100)\times |L|\big) }
    {
      \big|\text{active molecules in $L$}\big| \; / \; |L|
    }.
\]
We use the Cleves--Jain dataset \cite{cleves-jain}, and follow the methodology of \cite{shin-et-al}.
The dataset consists of a common set of $850$ decoys $D$, and, for each of $22$ targets $x \in \{a, \dots, v\}$, two to three compounds $\{q^x_i\}$ and a set of $4$ to $30$ actives $L_x$.
To quantify the performance of the method on the target $x$, one averages $EF_{\alpha}(q^x_i, L_x \cup D)$ over the compounds $\{q^x_i\}$.
The overall performance, which is what is reported in \cref{table:cleves-jain} of the main article for different choices of $\alpha$, is computed by averaging these quantities over the $22$ targets.

For topology-based methods for virtual screening which use the EF to assess performance, see \cite{keller-lesnick-willke,cang-mu-wei,demir-coskunuzer-gel-segovia-chen-kiziltan}.

\section{Runtime experiments}
\label{section:runtime-experiments}

We run these experiments in a computer with a Ryzen 4800 CPU, and with 16GB of RAM.

\subsection{Runtime of computation of Hilbert decomposition signed measure}
\label{section:runtime-hilbert}

\;

\noindent\textbf{Hilbert function by reducing multiparameter to one-parameter persistence.}
Let $(S, f : S \to \Rbb^n)$ be a filtered simplicial complex, and let $i \in \Nbb$.
Suppose we want to compute the Hilbert function of the homology persistence module $H_i(f) : \Rbb^n \to \vect$ restricted to a grid, which, without loss of generality, we may assume to be $\{0, \dots, m-1\}^n$ for some $m \in \Nbb$.
Given $a \in \{0,\dots,m-1\}^{n-1}$ and $b \in \{0,\dots,m-1\}$, denote $(a;b) = (a_1, \dots, a_{n-1}, b)$.
In particular, given $a \in \{0,\dots,m-1\}^{n-1}$, we get a $1$-parameter persistence module indexed by $b \in \{0, \dots, m-1\}$ by mapping $b$ to $H_i(f)(a;b) \in \vect$;
we denote this persistence module by $H^a_i(f) : \{0, \dots, m-1\} \to \vect$.

We proceed as follows.
For each $a \in \{0,\dots,m-1\}^{n-1}$, we use the one-parameter persistence algorithm \cite{edelsbrunner-letscher-zomorodian} to compute the Hilbert function of the module $H^a_i(f)$.
Thus, we perform $m^{n-1}$ runs of the one-parameter persistence algorithm.
The worst case complexity of the one-parameter persistence algorithm is $O\left((|S_{i-1}| + |S_{i}|+ |S_{i+1}|)^3\right)$, where $|S_k|$ is the number of $k$-dimensional simplices of the simplicial complex, but it is known to be almost linear in practice \cite{bauer-masood-giunti-houry-kerber-rathod}.

In the UCR examples, since we are dealing with Vietoris--Rips complexes which can have lots of simplices, we rely on the edge-collapse optimization of \cite{alonso-kerber-pritam} to speed up computations.
Since the point clouds are not too large, we do not perform any further optimizations, but we mention that optimizations like the ones of \cite{buchet-chazal-oudot-sheehy,sheehy} are available for dealing with large point clouds.

\noindent\textbf{Runtimes.}
In the fourth column (HSM) of \cref{table:computation-time-ucr,table:computation-time-graphs}, we report the time taken for computing the Hilbert decomposition signed measure with resolution $k=200$ starting from a filtered simplicial complex.
See the experiments section in the article for a description of these filtered simplicial complexes.
Then, in the last two columns of \cref{table:computation-time-ucr,table:computation-time-graphs} (MP-SW and MP-C), we report the time taken for computing our proposed vectorizations starting from the Hilbert decomposition signed measure.

One can see that, in both tables, the bottleneck is usually either the HSM or the MP-SW computation, as MP-C is quite fast to compute. Overall, the whole pipeline (decomposition + vectorization) can be achieved in a quite reasonable amount of time, as the running times never go beyond ~$10^2$ seconds, which is quite efficient in the realm of topological methods (see also next section).

\subsection{Runtime of whole pipeline}
In \cref{table:computation-time-ucr-comparison}, we compare the runtime of our full pipeline (from the point clouds obtained from the UCR datasets to the output of our vectorizations) to that of other pipelines based on multiparameter persistence.
For other pipelines, we use the numbers in \cite{carriere-blumberg}.

It is quite clear that our pipeline is much faster than the other topological baselines, by several orders of magnitude. This is generally due to the fact that Hilbert decomposition signed measures can be computed in the same amount of time than fibered barcodes (which are needed by the baselines), and can be turned into vectors in a single step with one of our proposed vectorizations, while other baselines require vectorizing all elements of the fibered barcodes.  

\begin{figure}
\begin{center}
  \includegraphics[width=0.48\linewidth]{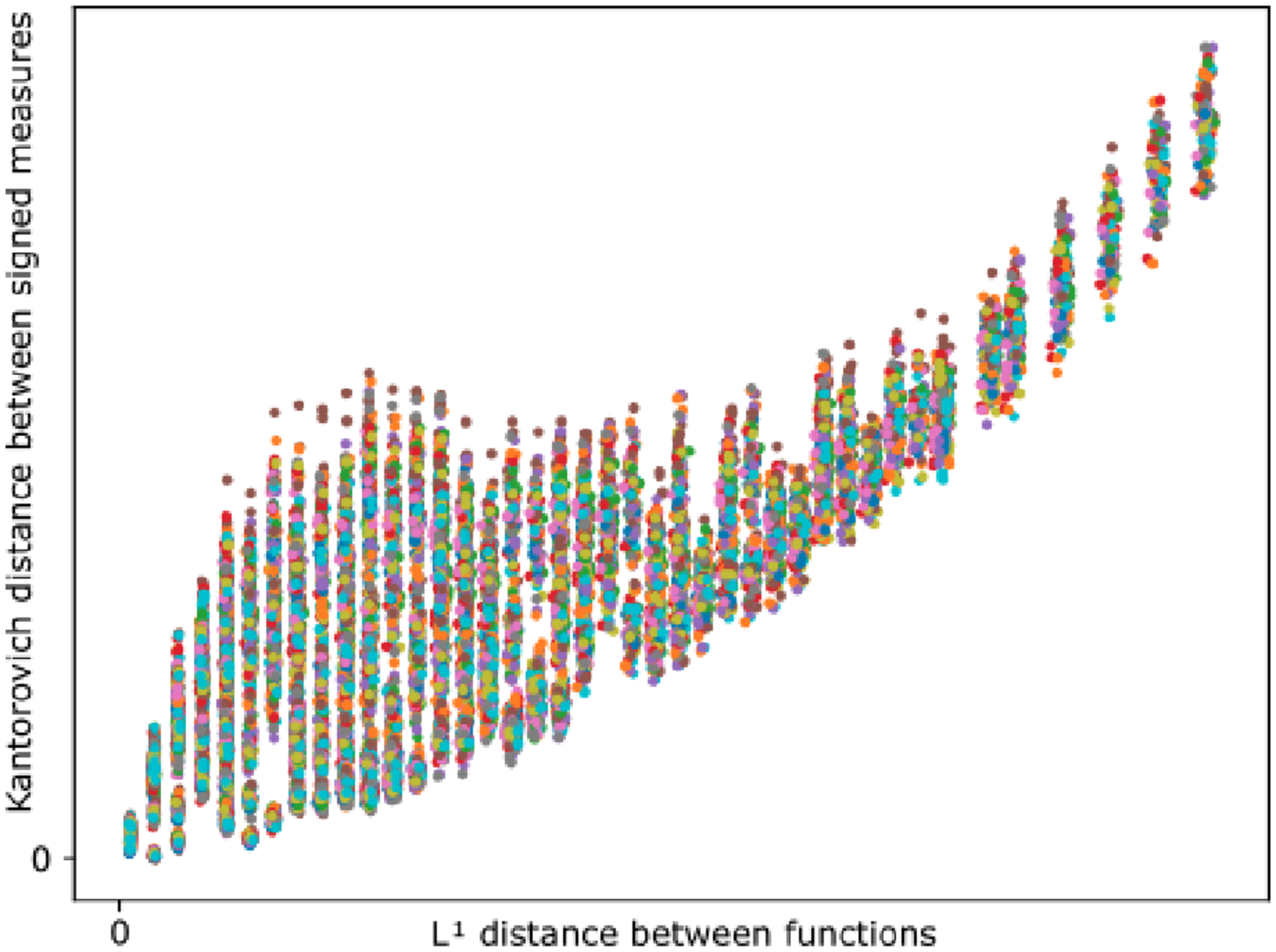}
  \vspace{0.3cm}

  \includegraphics[width=0.48\linewidth]{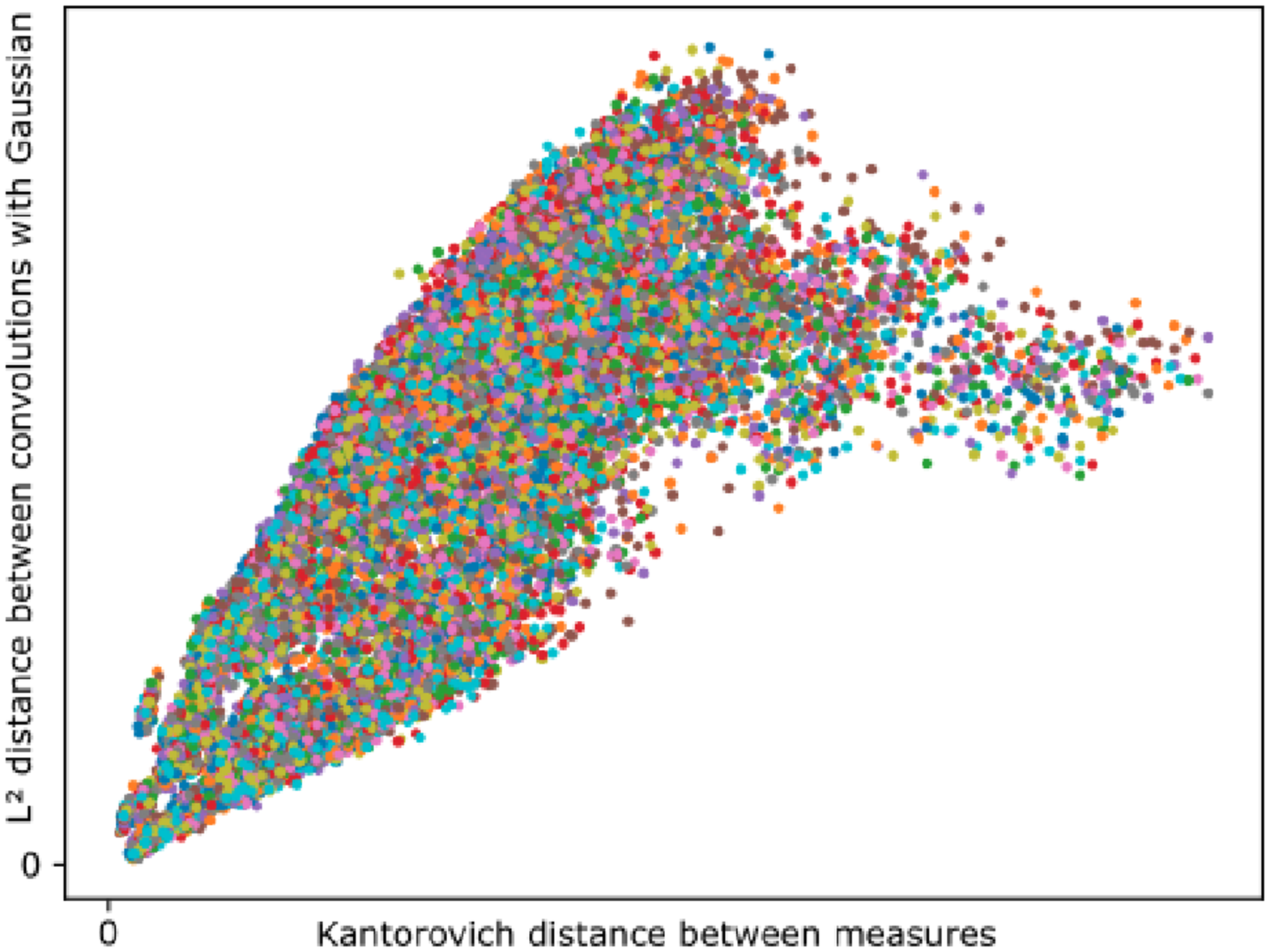}
  \hfill
  \includegraphics[width=0.48\linewidth]{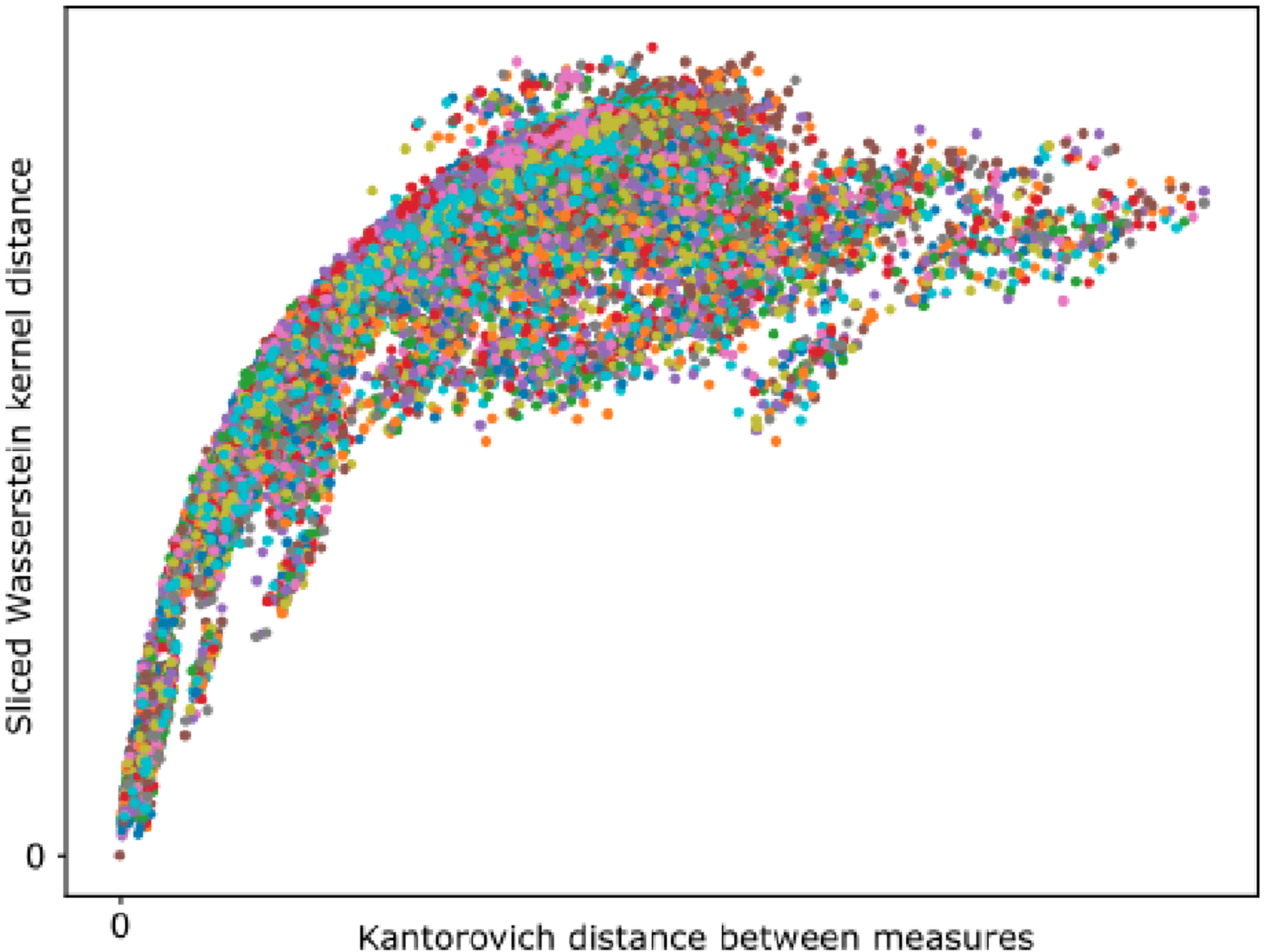}
  \end{center}
  \caption{\emph{Top:} $L^1$-distance between filtering functions and Kantorovich--Rubinstein distance between the associated Hilbert signed measures.
  \emph{Bottom:} Kantorovich--Rubistein distance between signed measures and distance between the vectors produced by our vectorizations.
  Different colors indicate different runs of the random walk used to construct the filtering functions.
  Axis do not have scale since scale depends on the choice of norms and of vectorization parameters.}
  \label{figure:stability-experiment}
\end{figure}

\section{Stability experiments}

In this experiment, we test our main stability results, \cref{theorem:homology-stability,proposition:stability-convolution,proposition:stability-sliced-wasserstein}.
We fix a simplicial complex $K$ and consider filtrations $f : K \to \Rbb^2$ using the lower-star filtration associated to functions $K_0 \to \Rbb^2$ on the vertices of $K$.
We construct functions on the vertices of $K$ as follows: we treat the function as a vector of dimension $|K_0|$, start with a constant vector, and iteratively add uniform random noise to each component.
This is effectively a random walk that, at each step, produces a function filtering $K$.
Thus, each random walk produces a set of filtering functions $\{f_i\}_{1 \leq i \leq k}$.

For each random walk (shown with a different color), we consider the $L^1$-distances between functions $\|f_i - f_j\|_1$, the Kantorovich--Rubinstein distances between Hilbert signed measures $\|\mu_{H_0(f_i)} - \mu_{H_0(f_j)}\|_2^\KR$, the sliced Wasserstein kernel distances between vectorizations $\|\text{HSM-SW}(\mu_{H_0(f_i)}) - \text{HSM-SW}(\mu_{H_0(f_j)})\|_{\Hcal}$, and the $L^2$-distances between convolution vectorizations $\|\text{HSM-C}(\mu_{H_0(f_i)}) - \text{HSM-C}(\mu_{H_0(f_j)})\|_2$.
See \cref{figure:stability-experiment}.

The fact that the points in the plot lie below a line with positive slope passing through the origin is a consequence of our stability results for the Hilbert signed measure and for the vectorizations.
The fact that the points in the plot lie above a line passing through the origin (at least for points with sufficiently small $x$-coordinate), shows that, for this kind of data, our proposed vectorizations are a strong invariant, meaning that it is able to distinguish different filtering functions.

\end{document}